\newtheorem{nlem}[theorem]{Lemma}
\crefname{nlem}{Lemma}{Lemmas}
\crefname{remark}{Remark}{Remarks}
\begin{document}

\begin{frontmatter}
\title{Edge-exchangeable graphs and sparsity}
\runtitle{Edge-exchangeable graphs and sparsity}

\begin{aug}
\author{\fnms{Diana}
\snm{Cai}
\ead[label=edc]{dcai@uchicago.edu},}
\author{\fnms{Trevor}
\snm{Campbell}
\ead[label=etc]{tdjc@mit.edu},}
\and
\author{\fnms{Tamara}
\snm{Broderick}
\ead[label=etb]{tbroderick@csail.mit.edu}}

\address[chiaddr]{Department of Statistics, \\
  University of Chicago, \\
  Chicago, IL, USA 60637
    \\ \printead{edc}
}
\address[csailaddr]{Computer Science and Artificial Intelligence Laboratory
  (CSAIL), \\
Massachusetts Institute of Technology, \\
  Cambridge, MA, USA 02139
    \\ \printead{etc,etb}
}
\runauthor{D.\ Cai, T.\ Campbell, T.\ Broderick}
\end{aug}

\begin{abstract}
Many popular network models rely on the assumption of
\emph{(vertex) exchangeability}, in which the distribution of the graph is invariant
to relabelings of the vertices.
However, the Aldous-Hoover theorem guarantees that
these graphs are dense or empty with probability one,
whereas many real-world graphs are sparse.
We present an alternative notion of
exchangeability for random graphs, which we call \emph{edge exchangeability},
in which the distribution of a graph sequence is invariant to
the order of the edges.
We demonstrate that edge-exchangeable models,
unlike models that are traditionally vertex exchangeable, can exhibit sparsity.
To do so, we outline a general framework for graph generative models;
by contrast to the pioneering work of \citet{caron2014arxiv},
models within our framework are stationary across steps of the graph sequence.
In particular, our model grows the graph by instantiating more latent atoms of a single random
measure as the dataset size increases, rather than adding new atoms to the
measure.
\end{abstract}

\begin{keyword}
\kwd{exchangeability}
\kwd{graph}
\kwd{edge exchangeability}
\kwd{Bayesian nonparametrics}
\end{keyword}

\end{frontmatter}

\maketitle

\section{Introduction}
In recent years, network data
have appeared in a growing number of applications, such
as online social networks, biological networks, and networks representing
communication patterns.
As a result, there is growing interest in developing models
for such data and studying their properties.
Crucially, individual network data sets also continue to increase in size;
we typically assume that the number of vertices is unbounded as time progresses.
We say a graph sequence is
\emph{dense} if
the number of edges grows quadratically in the number of vertices,
and a graph sequence is
\emph{sparse} if the number of edges grows
sub-quadratically as a function of the number of vertices.
Sparse graph sequences are more representative of real-world graph behavior.
However, many popular network models (see, e.g., \citet{DBLP:conf/nips/LloydOGR12} for an extensive list)
share the undesirable scaling property that
they yield dense sequences of graphs with probability one.
The poor scaling properties of these models can be traced back to
a seemingly innocent assumption: that the vertices in the model
are \emph{exchangeable}, that is, any finite permutation of the rows and columns of
the graph adjacency matrix does not change the distribution of the graph.
Under this assumption, the Aldous-Hoover theorem \citep{MR637937,Hoover79}
implies that such models generate dense or empty graphs with
probability one \citep{DBLP:journals/pami/OrbR14}.

This fundamental model misspecification motivates the development of new models that
can achieve sparsity.
One recent focus has been on models in which an additional parameter is employed to
uniformly decrease
the probabilities of edges
as the network grows
(e.g.,
\citet{MR2337396,BCCZ:arxiv,2013arXiv1309.5936W,borgs2015arxiv}).
While these models allow sparse graph sequences, the sequences are no
longer \emph{projective}. In projective sequences, vertices and edges are
added to a graph as a graph sequence progresses---whereas in the models above,
there is not generally any strict subgraph relationship between earlier graphs and later
graphs in the sequence.
Projectivity is natural in streaming modeling.
For instance, we may wish to capture new users joining a social network and new
connections being made among existing users---or new employees joining a company and
new communications between existing employees.

\citet{caron2014arxiv} have pioneered initial work on sparse, projective graph
sequences.
Instead of the \emph{vertex exchangeability} that yields the Aldous-Hoover theorem,
they consider a notion of
graph exchangeability based on the idea of independent increments of
subordinators \citep{MR2161313}, explored in depth by \citet{veitch:roy:arxiv}.
However, since this Kallenberg-style exchangeability
introduces a new countable infinity of latent vertices at every step in the graph sequence,
its generative mechanism seems particularly suited to the non-stationary domain.
By contrast, we are here interested in exploring \emph{stationary} models that grow in complexity with the size
of the data set. Consider classic Bayesian nonparametric models as the Chinese restaurant process (CRP)
and Indian buffet process (IBP); these engender growth by using a single infinite latent collection of parameters
to generate a finite but growing
set of instantiated parameters. Similarly, we propose a framework that uses
a single infinite latent collection of vertices to generate a finite but growing set of vertices
that participate in edges and thereby in the network.
We believe our framework will be a useful component in more complex,
non-stationary graphical models---just as the CRP and IBP are often combined with hidden Markov models or
other explicit non-stationary mechanisms.
Additionally, Kallenberg exchangeability is intimately tied to continuous-valued labels
of the vertices, and here we are interested in providing a characterization of the graph sequence
based solely on its topology.

In this work, we introduce a new form of exchangeability, distinct
from both vertex exchangeability and Kallenberg exchangeability.
In particular, we say that a graph sequence is \emph{edge exchangeable}
if the distribution of any graph in the sequence is
invariant to the \emph{order} in which edges arrive---rather than the order
of the vertices. We will demonstrate that edge exchangeability admits
a large family of sparse, projective graph
sequences.

In the remainder of the paper, we start by defining dense and sparse graph sequences
rigorously. We review vertex exchangeability before introducing our new notion of
edge exchangeability in \Cref{sec:exchangeability},
which we also contrast with Kallenberg exchangeability in more detail in \Cref{sec:relwork}.
We define a family of models, which we call \emph{graph frequency models},
based on random measures in \Cref{sec:generative}. We use these models to show that
edge-exchangeable models can yield sparse, projective graph sequences
via theoretical analysis
in \Cref{sec:poissp} and via simulations in \Cref{sec:simulations}.
Along the way, we highlight other benefits of the edge exchangeability and
graph frequency model frameworks.

\section{Exchangeability in graphs: old and new}
\label{sec:exchangeability}

Let
$(G_n)_n := G_1, G_2, \ldots$
be a sequence of graphs,
where each graph $G_n = (V_n, E_n)$ consists of a (finite) set of vertices $V_n$ and
a (finite) multiset of edges $E_n$.
Each edge $e \in E_n$ is a set of two vertices in $V_n$.
We assume the sequence is \emph{projective}---or growing---so that
$V_{n} \subseteq V_{n+1}$
and
$E_{n} \subseteq E_{n+1}$.
Consider, e.g., a social network with
more users joining the network and making new connections with existing users.
We say that a graph sequence is \emph{dense} if
$|E_n| = \Omega(|V_n|^2)$, i.e.,
the number of edges is asymptotically lower bounded by
$c \cdot |V_n|^2$ for some constant $c$.
Conversely, a sequence is \emph{sparse} if
$|E_n| = o(|V_n|^2)$, i.e., the number of edges is asymptotically upper bounded by
$c \cdot |V_n|^2$ for all constants $c$.
In what follows, we consider random graph sequences, and we focus on the case where $|V_n| \rightarrow \infty$
almost surely.

\subsection{Vertex-exchangeable graph sequences}

If the number of vertices in the graph sequence grows to infinity, the graphs in the sequence can be
thought of as subgraphs of an ``infinite'' graph with infinitely many vertices and a correspondingly infinite adjacency matrix.
Traditionally, exchangeability in random graphs is defined as the invariance of the distribution of any
finite submatrix of this adjacency matrix---corresponding to any finite collection of vertices---under finite permutation.
Equivalently, we can express this form of exchangeability, which we
henceforth call \emph{vertex exchangeability}, by considering a random sequence of graphs
$(G_n)_n$ with $V_n = [n]$, where $[n] := \{1,\ldots,n\}$. In this case, only the edge sequence is random.
Let $\pi$ be any permutation of the integers $[n]$. If $e = \{v,w\}$, let $\pi(e) := \{\pi(v),\pi(w)\}$.
If $E_n =  \{e_1,\ldots,e_m\}$, let $\pi(E_n) := \{\pi(e_1),\ldots,\pi(e_m)\}$.
\begin{definition}
Consider the
random graph sequence $(G_n)_n$, where $G_n$ has vertices $V_n = [n]$ and edges $E_n$. $(G_n)_n$ is (infinitely) \emph{vertex exchangeable}
if for every $n \in \Nats$ and
    for every permutation $\pi$ of the vertices $[n]$,
    $G_n \eqd \tilde{G}_n$, where $\tilde{G}_n$ has vertices $[n]$ and edges $\pi(E_n)$.
\end{definition}

A great many popular models for graphs are vertex exchangeable;
see \Cref{app:vertex}
and
\citet{DBLP:conf/nips/LloydOGR12} for a list.
However, it follows from
the Aldous-Hoover theorem \citep{MR637937,Hoover79} that
any vertex-exchangeable graph is a mixture of sampling procedures from \emph{graphons}.
Further, any graph sampled from a graphon is almost surely dense or empty \citep{DBLP:journals/pami/OrbR14}.
Thus,
vertex-exchangeable random graph models
are
misspecified models for sparse network datasets, as they
generate dense graphs.

\subsection{Edge-exchangeable graph sequences}

Vertex-exchangeable sequences have distributions invariant to the order of vertex
arrival. We introduce \emph{edge-exchangeable} graph sequences, which will instead be invariant to the
order of edge arrival. As before, we let $G_n = (V_n, E_n)$ be the $n$th graph in the sequence.
Here, though, we consider only \emph{active vertices}---that is, vertices that are connected via some edge.
That lets us define $V_n$ as a function of $E_n$; namely, $V_n$ is the union of the vertices in $E_n$. Note that a
graph that has sub-quadratic growth in the number of edges as a function of the number of active vertices will necessarily
have sub-quadratic growth in the number of edges as a function of the number of all vertices, so we obtain strictly stronger
results by considering active vertices. In this case, the graph $G_n$ is completely defined by its edge set $E_n$.

As above, we suppose that $E_{n} \subseteq E_{n+1}$. We can emphasize
this projectivity property by augmenting each edge with the step on which it is added to the sequence.
Let $E'_n$ be a collection of tuples, in which the first element is the edge and the
second element is the step (i.e., index) on which the edge is added: $E'_n = \{(e_1,s_1),\ldots,(e_m,s_m) \}$.
We can then define a \emph{step-augmented graph sequence} $(E'_n)_n = (E'_1,E'_2,\ldots)$
as a sequence of step-augmented edge sets. Note that there is a bijection between the step-augmented
graph sequence and the original graph sequence.

\newcommand{\rulesep}{\unskip\ \vrule\ }

\begin{figure}
    \centering
    \begin{subfigure}[b]{0.18\linewidth}
        \centering
        \begin{tikzpicture}
        \tikzstyle{vertex}=[circle,fill=blue!25,minimum size=17pt,inner sep=0pt]
            \node[vertex](A1) at (0,0) {1};
        \end{tikzpicture}
    \end{subfigure}
    \rulesep
    \begin{subfigure}[b]{0.18\linewidth}
        \centering
        \begin{tikzpicture}
        \tikzstyle{vertex}=[circle,fill=blue!25,minimum size=17pt,inner sep=0pt]
            \node[vertex](A1) at (0,0) {1};
            \node[vertex](A2) at (0,1.5) {2};
            \path[]
            (A1) edge node [left] {2} (A2);
        \end{tikzpicture}
    \end{subfigure}
    \rulesep
    \begin{subfigure}[b]{0.18\linewidth}
        \centering
        \begin{tikzpicture}
        \tikzstyle{vertex}=[circle,fill=blue!25,minimum size=17pt,inner sep=0pt]
            \node[vertex](A1) at (0,0) {1};
            \node[vertex](A2) at (0,1.5) {2};
            \node[vertex](A3) at (1.5,1.5) {3};
            \path[]
            (A1) edge node [left] {2} (A2);
        \end{tikzpicture}
    \end{subfigure}
    \rulesep
    \begin{subfigure}[b]{0.18\linewidth}
        \centering
        \begin{tikzpicture}
        \tikzstyle{vertex}=[circle,fill=blue!25,minimum size=17pt,inner sep=0pt]
            \node[vertex](A1) at (0,0) {1};
            \node[vertex](A2) at (0,1.5) {2};
            \node[vertex](A3) at (1.5,1.5) {3};
            \node[vertex](A4) at (1.5,0) {4};

            \path[]
            (A1) edge node [left] {2} (A2)
            (A4) edge node [above] {4} (A1)
            (A4) edge node [above] {4} (A2)
            (A4) edge node [right] {4} (A3);
        \end{tikzpicture}
    \end{subfigure}
    \rulesep \rulesep
    \begin{subfigure}[b]{0.18\linewidth}
        \centering
        \begin{tikzpicture}
        \tikzstyle{vertex}=[circle,fill=blue!25,minimum size=17pt,inner sep=0pt]
            \node[vertex](A1) at (0,0) {3};
            \node[vertex](A2) at (0,1.5) {2};
            \node[vertex](A3) at (1.5,1.5) {4};
            \node[vertex](A4) at (1.5,0) {1};

            \path[]
            (A1) edge node [left] {3} (A2)
            (A4) edge node [above] {3} (A1)
            (A4) edge node [above] {2} (A2)
            (A4) edge node [right] {4} (A3);
        \end{tikzpicture}
    \end{subfigure}

    \medskip
    \hrule
    \medskip

    \begin{subfigure}[b]{0.18\linewidth}
        \centering
        \begin{tikzpicture}
        \tikzstyle{vertex}=[circle,fill=blue!25,minimum size=17pt,inner sep=0pt]
            \node[vertex](A1) at (0,0) {2};
            \node[vertex](A2) at (0,1.5) {5};
            \path[]
            (A1) edge node [left] {1} (A2)
            (A2) edge [loop right] node {1} (A2);
        \end{tikzpicture}
    \end{subfigure}
    \rulesep
    \begin{subfigure}[b]{0.18\linewidth}
        \centering
        \begin{tikzpicture}
        \tikzstyle{vertex}=[circle,fill=blue!25,minimum size=17pt,inner sep=0pt]
            \node[vertex](A1) at (0,0) {2};
            \node[vertex](A2) at (0,1.5) {5};
            \path[]
            (A1) edge node [left] {1} (A2)
            (A2) edge [loop right] node {1} (A2);
        \end{tikzpicture}
    \end{subfigure}
    \rulesep
    \begin{subfigure}[b]{0.18\linewidth}
        \centering
        \begin{tikzpicture}
        \tikzstyle{vertex}=[circle,fill=blue!25,minimum size=17pt,inner sep=0pt]
            \node[vertex](A1) at (0,0) {2};
            \node[vertex](A2) at (0,1.5) {5};
            \path[]
            (A1) edge node [left] {1} (A2)
            (A1) edge [bend right=40, right] node {3} (A2)
            (A2) edge [loop right] node {1} (A2);
        \end{tikzpicture}
    \end{subfigure}
    \rulesep
    \begin{subfigure}[b]{0.18\linewidth}
        \centering
        \begin{tikzpicture}
        \tikzstyle{vertex}=[circle,fill=blue!25,minimum size=17pt,inner sep=0pt]
            \node[vertex](A1) at (0,0) {2};
            \node[vertex](A2) at (0,1.5) {5};
            \node[vertex](A3) at (1.5,1.5) {1};
            \node[vertex](A4) at (1.5,0) {6};

            \path[]
            (A1) edge node [left] {1} (A2)
            (A1) edge [bend right=40, right] node {3} (A2)
            (A2) edge [loop right] node {1} (A2)
            (A4) edge node [right] {4} (A3);
        \end{tikzpicture}
    \end{subfigure}
    \rulesep \rulesep
    \begin{subfigure}[b]{0.18\linewidth}
        \centering
        \begin{tikzpicture}
        \tikzstyle{vertex}=[circle,fill=blue!25,minimum size=17pt,inner sep=0pt]
            \node[vertex](A1) at (0,0) {2};
            \node[vertex](A2) at (0,1.5) {5};
            \node[vertex](A3) at (1.5,1.5) {1};
            \node[vertex](A4) at (1.5,0) {6};

            \path[]
            (A1) edge node [left] {4} (A2)
            (A1) edge [bend right=40, right] node {2} (A2)
            (A2) edge [loop right] node {4} (A2)
            (A4) edge node [right] {1} (A3);
        \end{tikzpicture}
    \end{subfigure}

    \caption{\emph{Upper, left four}: Step-augmented graph sequence from Ex.~\ref{ex:vertex_setup}. At each step $n$,
    the step value is always at least the maximum vertex index. \emph{Upper, right two}: Two graphs
    with the same probability under vertex exchangeability. \emph{Lower, left four}: Step-augmented graph sequence
    from Ex.~\ref{ex:step-aug}. \emph{Lower, right two}: Two graphs with the same probability under edge exchangeability. }
    \label{fig:ex-graphs}
\end{figure}
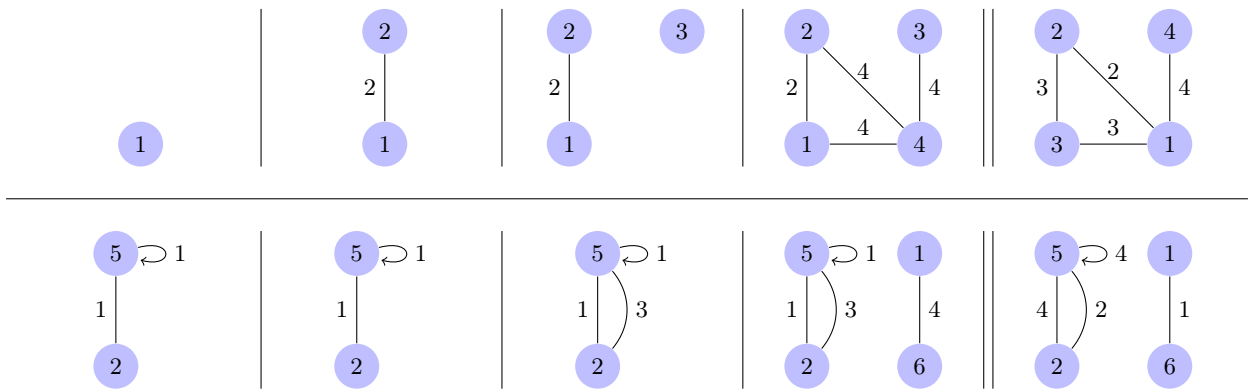

\begin{example}
\label{ex:vertex_setup}
In the setup for vertex exchangeability, we assumed $V_n = [n]$ and every edge is introduced
as soon as both of its vertices are introduced. In this case, the step of any edge
in the step-augmented graph is the maximum vertex value. For example,
in \Cref{fig:ex-graphs}, we have
\begin{align*}
    E'_1 &= \emptyset, 
    E'_2 = 
	E'_3 = \{(\{1,2\},2)\},
    E'_4 = \{(\{1,2\},2), (\{1,4\},4),(\{2,4\},4),(\{3,4\},4)\}.
\end{align*}
In general step-augmented graphs, though, the step need not equal the max
vertex, as we see next.
\end{example}

\begin{example}
\label{ex:step-aug}
   Suppose we have a graph given by the edge sequence (see \Cref{fig:ex-graphs}):
\begin{align*}
	E_1 &= E_2 = \{\{2,5\}, \{5,5\}\},
	E_3 = E_2 \cup \{\{2,5\} \},
	E_4 = E_3 \cup \{ \{1,6\} \}.
\end{align*}%
The step-augmented graph $E'_4$ is
$\{(\{2,5\},1),(\{5,5\},1), (\{2,5\},3), (\{1,6\},4)\}.$
\end{example}

Roughly, a random graph sequence is edge exchangeable if its distribution is invariant to finite permutations of the steps.
Let $\pi$ be a permutation of the integers $[n]$.
For a step-augmented edge set $E'_n = \{(e_1,s_1),\ldots,(e_m,s_m) \}$, let $\pi(E'_n) =
\{(e_1,\pi(s_1)),\ldots,(e_m,\pi(s_m)) \}$.

\begin{definition}
	Consider the random graph sequence $(G_n)_n$, where $G_n$ has step-augmented edges $E'_n$
	and $V_n$ are the active vertices of $E_n$. $(G_n)_n$ is (infinitely) \emph{edge exchangeable}
	if for every $n \in \Nats$ and for every permutation $\pi$ of the steps $[n]$,
	$G_n \eqd \tilde{G}_n$, where $\tilde{G}_n$ has step-augmented edges $\pi(E'_n)$ and
	associated active vertices.
\end{definition}
See \Cref{fig:ex-graphs} for visualizations of both vertex exchangeability and edge exchangeability.
 It remains to show that there are non-trivial models that are edge exchangeable
(\Cref{sec:generative}) and that edge-exchangeable models admit sparse graphs (\Cref{sec:poissp}).

\section{Edge-exchangeable graph frequency models}
\label{sec:generative}

We next demonstrate that a wide class of models, which we call \emph{graph frequency models}, exhibit edge exchangeability.
Consider a latent infinity of vertices indexed by the positive integers $\Nats = \{1,2,\ldots\}$,
along with an infinity of edge labels $(\theta_{\{i,j\}})$, each in a
set $\Theta$, and positive edge rates (or frequencies) $(w_{\{i,j\}})$ in $\mathbb{R}_+$.
We allow both the $(\theta_{\{i,j\}})$ and $(w_{\{i,j\}})$ to be random, though this is not mandatory.
For instance, we might choose $\theta_{\{i,j\}} = (i,j)$ for $i \le j$, and $\Theta = \mathbb{R}^2$.
Alternatively, the $\theta_{\{i,j\}}$ could be drawn iid from a continuous distribution such as $\textrm{Unif}[0,1]$.
For any choice of $(\theta_{\{i,j\}})$ and $(w_{\{i,j\}})$,
\begin{align}
	\label{eq:freqm}
	W := \sum_{\{i,j\}: i,j \in \Nats} w_{\{i,j\}} \delta_{\theta_{\{i,j\}}}
\end{align}
is a \emph{measure} on $\Theta$.
Moreover, it is a discrete measure since it is always atomic.
If either $(\theta_{\{i,j\}})$ or $(w_{\{i,j\}})$ (or both)
are random, $W$ is a \emph{discrete random measure} on $\Theta$
since it is a random, discrete-measure-valued element.
Given the edge rates (or frequencies) $(w_{\{i,j\}})$ in $W$, we next show some natural ways to construct edge-exchangeable graphs.

\paragraph{Single edge per step}
If the rates $(w_{\{i,j\}})$ are normalized such that $\sum_{\{i,j\}: i, j \in \Nats} w_{\{i,j\}} = 1$,
then $(w_{\{i,j\}})$ is a distribution over all possible vertex pairs. In other words, $W$ is a
probability measure.
We can form an edge-exchangeable graph sequence by first drawing values for
$(w_{\{i,j\}})$ and $(\theta_{\{i,j\}})$---and setting $E_0 = \emptyset$. We recursively
set $E_{n+1} = E_n \cup \{ e \}$, where $e$ is an edge $\{i,j\}$ chosen from the distribution
$(w_{\{i,j\}})$. This construction introduces a single edge in the graph each step, although it may
be a duplicate of an edge that already exists. Therefore, this technique generates multigraphs
one edge at a time.
Since the edge every step is drawn conditionally iid given $W$, we have an edge-exchangeable graph.

\paragraph{Multiple edges per step}
Alternatively, the rates $(w_{\{i,j\}})$
may not be normalized.
Then $W$ may not be a probability measure.
Let $f(m | w)$ be a distribution over non-negative integers $m$ given some rate
$w \in \mathbb{R}_{+}$.
We again initialize our sequence by drawing
$(w_{\{i,j\}})$ and $(\theta_{\{i,j\}})$ and setting $E_0 = \emptyset$.
In this case, recursively, on the $n$th step,
start by setting $F = \emptyset$.
For every possible edge $e = \{i,j\}$, we draw the multiplicity of the edge $e$
in this step as $m_{e} \indep f(\cdot | w_{e})$
and add $m_{e}$ copies of edge $e$ to $F$. Finally, $E_{n+1} = E_n \cup F$.
This technique potentially introduces multiple edges in each step, in which
edges themselves may have multiplicity greater than one and may be duplicates of edges that already exist in the graph.
Therefore, this technique generates multigraphs, multiple edges at a time.
If we restrict $f$ and $W$ such that finitely many edges are added on every step almost surely,
we have an edge-exchangeable graph, as the edges in each step are drawn conditionally iid given $W$.

Given a sequence of edge sets $E_0, E_1, \dots$ constructed via either of the above methods, we can form a binary graph sequence
$\bar E_0, \bar E_1, \dots$ by setting $\bar E_i$ to have the same edges as $E_i$ except with multiplicity $1$. Although
this binary graph
is not itself edge exchangeable, it inherits many of the properties (such as sparsity, as shown in \Cref{sec:poissp}) of the underlying edge-exchangeable multigraph.

The choice of the distribution on the measure $W$
has a strong influence on the properties of the resulting edge-exchangeable graph
sampled via one of the above methods.
For example, one choice is to set $w_{\{i,j\}} = w_i w_j$, where the $(w_i)_i$ are a countable infinity of random values generated
according to a \emph{Poisson point process} (PPP). We say that $(w_i)_i$ is distributed
according to a Poisson point process parameterized by rate measure $\nu$,
$(w_i)_i \sim \textrm{PPP}(\nu)$,
if (a) $\#\{i: w_i \in A\} \sim \textrm{Poisson}(\nu(A))$ for any set $A$ with finite measure $\nu(A)$ and (b)
$\#\{i: w_i \in A_j\}$ are independent random variables across any finite collection of disjoint sets $(A_j)_{j=1}^{J}$.
In \Cref{sec:poissp} we examine a particular example of this graph frequency model, and demonstrate
that sparsity is possible in edge-exchangeable graphs.

\section{Related work and connection to nonparametric Bayes}
\label{sec:relwork}

Given a unique label $\theta_i$ for each vertex $i\in\mathbb{N}$,
and denoting $g_{ij} = g_{ji}$ to be the number of undirected edges between vertices $i$ and $j$,
the graph itself can be represented as the discrete random measure
$
    G = \sum_{i,j}
    g_{ij}
    \delta_{(\theta_i, \theta_j)}
$ on $\Reals_+^2$.
A different notion of exchangeability for graphs than
the ones in \Cref{sec:exchangeability}
can be phrased for such atomic random measures:
a point process $G$ on $\Rplus^2$ is (jointly) exchangeable if,
for all finite permutations $\pi$ of $\Nats$ and all $h > 0$,
\begin{align*}
    G(A_i \times A_j) \eqd   G(A_{\pi(i)} \times A_{\pi(j)}),
    \text{~for~} (i,j) \in \Nats^2,
    \qquad \text{where~} A_i :=[h\cdot(i-1), h\cdot i].
\end{align*}
This form of exchangeability, which we refer to as \emph{Kallenberg exchangeability},
can intuitively be viewed as invariance of the graph distribution to relabeling of the vertices,
which are now embedded in $\Rplus^2$. As such it is analogous to vertex exchangeability, but for
discrete random measures \citep[Sec.~4.1]{caron2014arxiv}.
Exchangeability for random measures was introduced by
Aldous \citep{MR883646}, and a representation theorem was given by
Kallenberg \citep[Ch.\ 9]{MR2161313,MR1031426}.
The use of Kallenberg exchangeability for modeling graphs was
first proposed by
\citet{caron2014arxiv},
and then characterized in greater generality by
\citet{veitch:roy:arxiv}
and
\citet{BCCH:arxiv}.
Edge exchangeability is distinct from Kallenberg exchangeability,
as shown by the following example.
\begin{example}[Edge exchangeable but not Kallenberg exchangeable]
    Consider the graph frequency model developed in \Cref{sec:generative}, with $w_{\{i,j\}} = (ij)^{-2}$ and $\theta_{\{i,j\}} = \{i,j\}$.
    Since the edges at each step are drawn iid given $W$, the graph sequence is edge exchangeable. However,
    the corresponding graph measure $G = \sum_{i,j} n_{ij}\delta_{(i, j)}$ (where $n_{ij} = n_{ji} \sim\mathrm{Binom}(N, (ij)^{-2})$)
    is not Kallenberg exchangeable, since the probability of generating edge $\{i,j\}$ is directly related to the positions $(i,j)$ and $(j, i)$ in $\mathbb{R}_+^2$ of the corresponding atoms
    in $G$ (in particular, the probability is decreasing in $ij$).
\end{example}

\begin{figure}
    \centering
    \begin{subfigure}[b]{0.49\linewidth}
        \centering
        \includegraphics[scale=0.42]{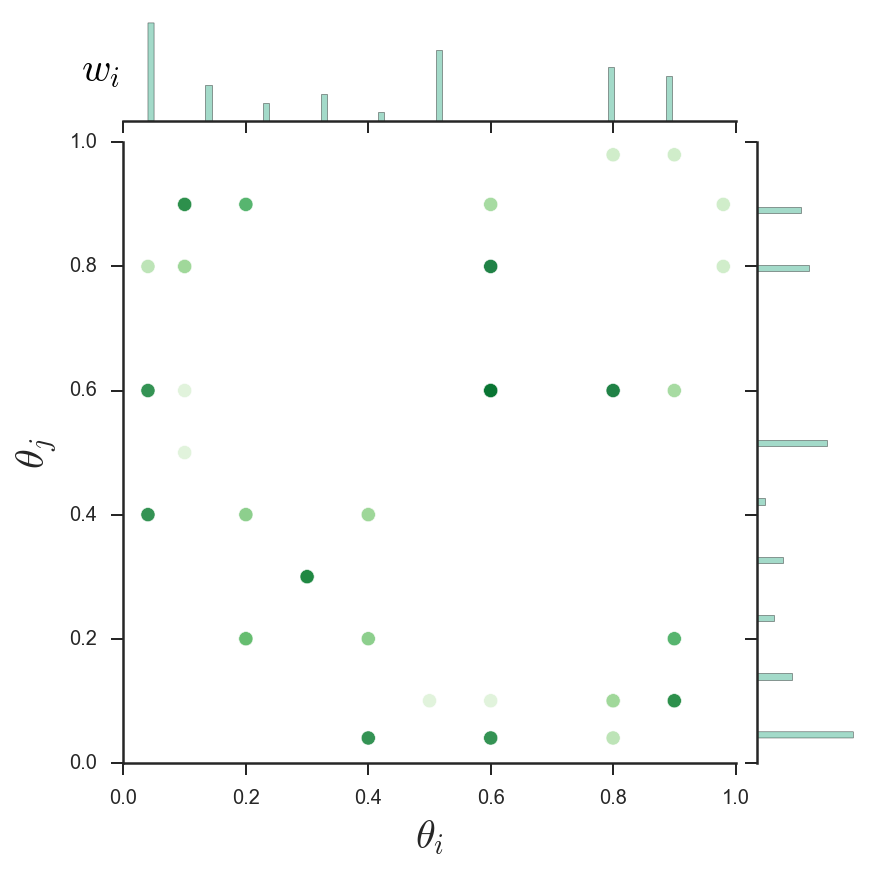}
        \caption{Graph frequency model (fixed $y$, $n$ steps)}
    \end{subfigure}
    \begin{subfigure}[b]{0.49\linewidth}
        \centering
        \includegraphics[scale=0.42]{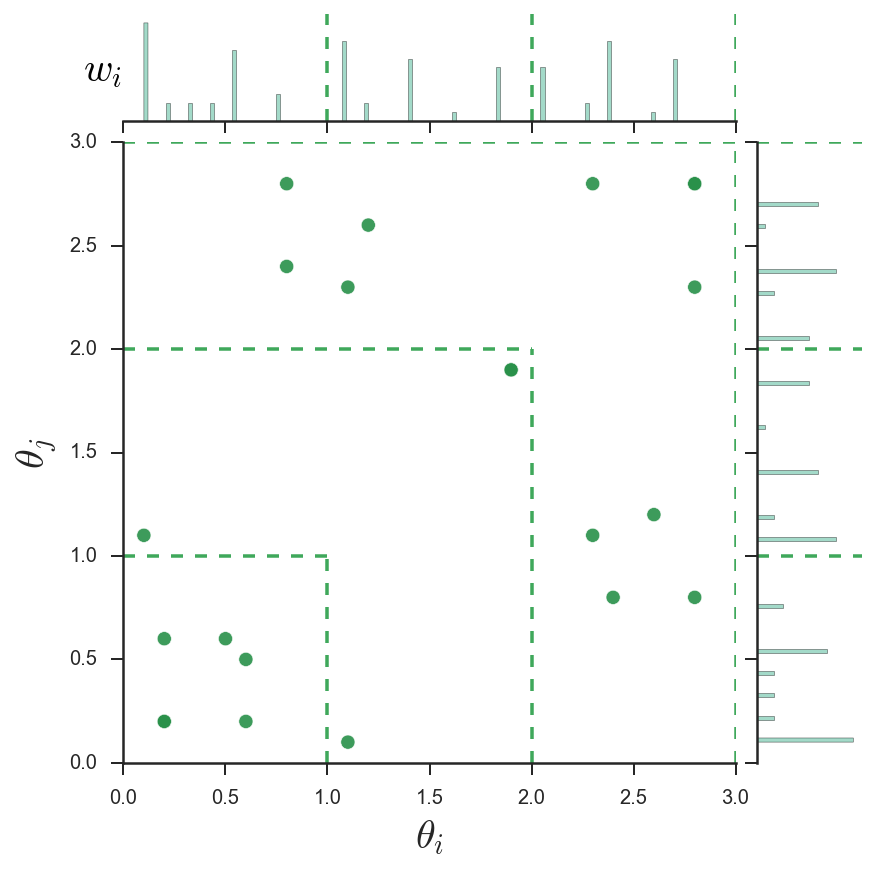}
        \caption{Caron--Fox, PPP on $[0,y]\times [0,y]$ (1 step, $y$ grows)}
    \end{subfigure}
    \caption{
    	A comparison of a graph frequency model (\Cref{sec:generative} and \Cref{eq:graphmodel})
	and the generative model of \citet{caron2014arxiv}.
	Any interval $[0,y]$ contains a countably
        infinite number of atoms with a nonzero weight in the random measure; a draw
        from the random measure is plotted at the top (and repeated on the right side).
        Each atom corresponds
        to a latent vertex.
        Each point $(\theta_i,\theta_j)$ corresponds to a latent edge. Darker point colors
        on the left occur for greater edge multiplicities. On the \emph{left}, more latent edges are instantiated
        as more steps $n$ are taken. On the \emph{right}, the edges within $[0,y]^2$ are fixed, but more edges
        are instantiated as $y$ grows.
    }
    \label{fig:cffreq}
    \vspace{-10pt}
\end{figure}

Our graph frequency model is reminiscent of the \citet{caron2014arxiv} generative model, but has a number of key differences.
At a high level, this earlier model generates a weight measure $W = \sum_{i,j} w_{ij}\delta_{(\theta_i,\theta_j)}$
(\citet{caron2014arxiv} used, in particular, the outer product of a completely random measure),
and the graph measure $G$ is constructed by sampling $g_{ij}$ \emph{once} given $w_{ij}$ for each pair $i, j$.
To create a finite graph, the graph measure $G$ is restricted to the
subset $[0, y]\times [0, y]\subset\mathbb{R}_+^2$ for $0<y<\infty$; to create
a projective growing graph sequence, the value of $y$ is increased.
By contrast, in the analogous graph frequency model of the present work, $y$
is fixed, and we grow the network by \emph{repeatedly} sampling the number of edges $g_{ij}$ between vertices $i$ and $j$
and summing the result.
Thus, in the \citet{caron2014arxiv} model, a latent infinity of vertices (only finitely many of which are active) are added to the network each time $y$ increases.
In our graph frequency model, there is a single collection of latent vertices, which
are all gradually activated by increasing the number of samples that generate edges
between the vertices. See \Cref{fig:cffreq} for an illustration.

Increasing $n$ in the graph frequency model has the interpretation of both
(a) time passing and (b) new individuals joining a network because they have formed a connection
that was not previously there. In particular, only latent individuals that will eventually join the network
are considered.
This behavior is analogous to the well-known behavior of other nonparametric Bayesian models
such as, e.g., a Chinese
restaurant process (CRP). In this analogy, the Dirichlet process (DP) corresponds to our
graph frequency model, and the clusters instantiated by the CRP correspond to the vertices that are active after $n$ steps.
In the DP, only latent clusters that will eventually appear in the data are modeled.
Since the graph frequency setting is stationary like the DP/CRP, it may be more straightforward
to develop approximate Bayesian inference algorithms, e.g., via truncation \citep{Campbell16}.

Edge exchangeability first appeared in work by \citet{crane:dempsey15,
crane:dempsey15b,williamson16},
and \citet{Broderick15, Broderick15b,
Cai15}.
\citet{Broderick15, Broderick15b} established the notion of edge exchangeability used here and
provided characterizations via exchangeable partitions and feature allocations, as in \Cref{app:edge}.
\citet{Broderick15,Cai15}
developed a frequency model based on weights $(w_i)_{i}$ generated from a
Poisson process and studied several types of power laws in the model.
\citet{crane:dempsey15} established a similar
notion of edge exchangeability in the context of a larger statistical modeling framework.
\citet{crane:dempsey15b,crane:dempsey15} provided sparsity and power law results for the case where the weights
$(w_i)_{i}$ are generated from a Pitman-Yor process and
power law degree distribution simulations.
\citet{williamson16} described a similar notion of edge exchangeability and developed an edge-exchangeable model where
the weights $(w_i)_i$ are generated from a Dirichlet process,
a mixture model extension,
and an efficient Bayesian inference procedure.
In work concurrent to the present paper, \citet{crane:dempsey16} re-examined edge exchangeability, provided a representation theorem,
and studied sparsity and power laws for the same model based on Pitman-Yor weights.
By contrast, we here obtain sparsity
results across all Poisson point process-based graph frequency models of the form in \Cref{eq:graphmodel} below, and use a specific
three-parameter beta process rate measure only for simulations in \Cref{sec:simulations}.

\section{Sparsity in Poisson process graph frequency models}
\label{sec:poissp}
We now demonstrate that, unlike vertex exchangeability, edge exchangeability allows for sparsity
in random graph sequences.
We develop a class of sparse, edge-exchangeable multigraph
sequences
via the Poisson point process construction introduced in \Cref{sec:generative}, along with
their binary restrictions.

\paragraph{Model}
Let $\mathcal{W}$ be a Poisson process on $[0, 1]$ with
a nonatomic, $\sigma$-finite rate measure $\nu$ satisfying
$\nu([0, 1]) = \infty$ and $\int_0^1 w \nu(\mathrm{d}w) < \infty$.
These two conditions on $\nu$ guarantee that $\mathcal{W}$ is a countably infinite collection of rates in $[0,1]$
and that $\sum_{w\in\mathcal{W}} w< \infty$ almost surely.
We can use $\mathcal{W}$ to construct the set of rates:
$w_{\{i,j\}} = w_iw_j$ if $i\neq j$, and $w_{\{i, i\}}=0$. The edge labels $\theta_{\{i,j\}}$ are unimportant in characterizing
sparsity, and so can be ignored.

To use the multiple-edges-per-step graph frequency model from \Cref{sec:generative},
we let $f(\cdot | w)$ be Bernoulli with probability $w$.
Since edge
$\{i, j\}$ is added in each step with probability $w_iw_j$, its multiplicity $M_{\{i,j\}}$
after $n$ steps
has a binomial distribution with parameters $n, w_iw_j$.
Note that self-loops are avoided by setting $w_{\{i, i\}} =0$.
Therefore, the graph after $n$ steps is described by:
\begin{align}
    \label{eq:graphmodel}
   \mathcal{W} \sim \text{PPP}(\nu)
   && M_{\{i,j\}} \indep \text{Binom}(n, w_iw_j) \, \, \text{~for~}
    i < j \in\mathbb{N}.
\end{align}
As mentioned earlier, this generative model yields an edge-exchangeable graph,
with edge multiset $E_n$ containing $\{i, j\}$ with multiplicity $M_{\{i, j\}}$, and active vertices $V_n = \{i: \sum_j M_{\{i, j\}}>0\}$.
Although this model generates multigraphs, it can be modified to sample a binary
graph $(\bar V_n, \bar E_n)$ by setting
$\bar V_n = V_n$ and $\bar E_n$ to the set of edges $\{i, j\}$ such that $\{i, j\}$ has multiplicity $\geq 1$ in $E_n$.
We can express the number of vertices and edges, in the multi- and binary graphs respectively, as
\begin{align*}
\hspace{-.25cm}|\bar V_n|\! =\! |V_n| \!=\! \sum_{i} \mathds{1}\!\!\left(\!\sum_{j\neq i} M_{\{i,j\}} > 0\!\right) , \,
\,\,\, |E_n| = \frac{1}{2}\sum_{i\neq j}M_{\{i, j\}}, \,\,\,
|\bar E_n| = \frac{1}{2}\sum_{i\neq j} \mathds{1}\left(M_{\{i, j\}}>0\right).
\end{align*}

\paragraph{Moments}
Recall that a sequence of graphs is considered \emph{sparse} if $|E_n| = o(|V_n|^2)$.
Thus, sparsity in the present setting is an \emph{asymptotic} property of a random graph sequence.
Rather than consider the asymptotics of the (dependent) random sequences $|E_n|$ and $|V_n|$ in concert,
\Cref{lem:xvsex} allows us to consider the asymptotics of their first moments, which are deterministic sequences
and can be analyzed separately. We use $\sim$ to denote asymptotic equivalence,
i.e., $a_n\sim b_n \iff \lim_{n\to\infty} \frac{a_n}{b_n} = 1$.
For details on our asymptotic notation and proofs for this section, see \Cref{app:proofs}.
\begin{nlem}\label{lem:xvsex}
The number of vertices and edges for both the multi- and binary graphs satisfy
\begin{align*}
 |\bar V_n| = |V_n| \overset{\text{a.s.}}{\sim} \mbe\left(|V_n|\right), \qquad |E_n| \overset{\text{a.s.}}{\sim} \mbe\left(|E_n|\right), \qquad |\bar E_n| \overset{\text{a.s.}}{\sim} \mbe\left(|\bar E_n|\right), \qquad n\to \infty.
\end{align*}
\end{nlem}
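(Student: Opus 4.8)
The plan is to condition throughout on the random measure $\mathcal{W}$, so that the first moments appearing in the statement become deterministic functions of $n$, and to prove each equivalence conditionally on $\mathcal{W}$ for $\mathcal{W}$ in a set of probability one; a property that holds conditionally on $\mathcal{W}$ for almost every $\mathcal{W}$ holds almost surely, so this suffices. Write $Y_n$ for any one of $|V_n|$, $|E_n|$, $|\bar{E}_n|$, and $m_n := \mathbb{E}[Y_n\mid\mathcal{W}]$. All three quantities share: (a) $Y_n$ is nondecreasing in $n$; (b) $Y_n\to\infty$ almost surely, so $m_n\uparrow\infty$ almost surely by conditional monotone convergence; and (c) $\mathrm{Var}(Y_n\mid\mathcal{W}) = O(m_n)$ almost surely. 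For (c): both $|E_n| = \tfrac12\sum_{i\neq j}M_{\{i,j\}}$ and $|\bar{E}_n| = \tfrac12\sum_{i\neq j}\mathds{1}(M_{\{i,j\}}>0)$ are, given $\mathcal{W}$, sums of summands that are independent across pairs (the multiplicities, resp.\ their indicators), each with variance at most its mean --- $\mathrm{Var}(\mathrm{Binom}(n,p)) = np(1-p)\le np$, $\mathrm{Var}(\mathrm{Bern}(p)) = p(1-p)\le p$ --- so directly $\mathrm{Var}(\,\cdot\mid\mathcal{W})\le\mathbb{E}[\,\cdot\mid\mathcal{W}]$. The active-vertex indicators $X_i := \mathds{1}(\sum_{j\neq i}M_{\{i,j\}}>0)$ summing to $|V_n|$ are \emph{not} independent given $\mathcal{W}$, since $X_i$ and $X_{i'}$ both involve $M_{\{i,i'\}}$; conditioning on that shared variable gives $\mathrm{Cov}(X_i,X_{i'}\mid\mathcal{W}) = q_{ii'}(1-q_{ii'})\,r_i r_{i'}$, where $q_{ii'} = \mathbb{P}(M_{\{i,i'\}}>0\mid\mathcal{W})$ and $r_i := \prod_{k\neq i,i'}(1-w_iw_k)^n$ is the conditional probability that $i$ has no edge other than possibly $\{i,i'\}$, so that $q_{ii'}r_i$ is the conditional probability that $i$ is active \emph{only} through $\{i,i'\}$. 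Ranging over $i'\neq i$, these last events are pairwise disjoint, hence $\sum_{i'\neq i}q_{ii'}r_i \le \mathbb{P}(X_i=1\mid\mathcal{W})$; summing over $i$ bounds $\sum_{i\neq i'}\mathrm{Cov}(X_i,X_{i'}\mid\mathcal{W})$ by $m_n$, so $\mathrm{Var}(|V_n|\mid\mathcal{W})\le 2m_n$. In all cases (c) gives $\mathrm{Var}(Y_n\mid\mathcal{W})/m_n^2 = O(1/m_n)\to 0$ almost surely, and conditional Chebyshev yields $Y_n/m_n\to 1$ in conditional probability.

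To upgrade to almost-sure convergence I would run the standard monotone-sequence argument. The increments of $m_n$ are bounded: $m_{n+1}-m_n\le C(\mathcal{W})$ with $C(\mathcal{W}) := \sum_{i\neq j}w_iw_j = \big(\sum_i w_i\big)^2 - \sum_i w_i^2 < \infty$ almost surely, since in one step the expected number of edges added, of distinct edges added, and of newly activated vertices are each at most $\sum_{i\neq j}w_iw_j$ (for $|E_n|$ the increment is exactly $\tfrac12\sum_{i\neq j}w_iw_j$, and conditionally on $\mathcal{W}$ these increments are even i.i.d.\ in $n$, so the strong law already gives $|E_n|\overset{\text{a.s.}}{\sim}m_n$ in that case). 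Together with $m_n\uparrow\infty$ this lets us define the $\mathcal{W}$-measurable subsequence $n_k := \inf\{n : m_n\ge k^2\}$, for which $k^2\le m_{n_k}\le k^2 + C(\mathcal{W})$, hence $m_{n_k}/m_{n_{k+1}}\to 1$. Along $(n_k)_k$ the conditional Chebyshev bounds $\mathbb{P}(|Y_{n_k}-m_{n_k}|>\varepsilon m_{n_k}\mid\mathcal{W})\le \mathrm{Var}(Y_{n_k}\mid\mathcal{W})/(\varepsilon^2 m_{n_k}^2) = O(\varepsilon^{-2}k^{-2})$ are summable, so Borel--Cantelli gives $Y_{n_k}/m_{n_k}\to 1$ conditionally almost surely. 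Finally, for $n_k\le n<n_{k+1}$, monotonicity of both $Y_n$ and $m_n$ sandwiches $Y_n/m_n$ between $(Y_{n_k}/m_{n_k})(m_{n_k}/m_{n_{k+1}})$ and $(Y_{n_{k+1}}/m_{n_{k+1}})(m_{n_{k+1}}/m_{n_k})$, both of which tend to $1$; so $Y_n/m_n\to 1$ conditionally almost surely, and integrating out $\mathcal{W}$ gives the claim. Taking $Y_n = |V_n|,|E_n|,|\bar{E}_n|$ in turn proves the three equivalences, and $|\bar{V}_n| = |V_n|$ holds by definition of the binary graph.

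The two routine facts invoked above --- $|V_n|,|E_n|,|\bar{E}_n|\to\infty$ a.s.\ and $C(\mathcal{W})<\infty$ a.s.\ --- are exactly where the hypotheses on $\nu$ enter: $\nu([0,1]) = \infty$ makes $\mathcal{W}$ a.s.\ an infinite point set (so infinitely many latent vertices and edges exist), and $\int_0^1 w\,\nu(\mathrm{d}w)<\infty$ gives $\sum_i w_i<\infty$ a.s.\ and hence $C(\mathcal{W})<\infty$; moreover $\mathbb{P}(\{i,j\}\text{ absent after }n\text{ steps}\mid\mathcal{W}) = (1-w_iw_j)^n\to 0$ whenever $w_iw_j>0$, so eventually all $\binom{K}{2}$ edges among any fixed $K$ vertices appear in $\bar{E}_n$ (forcing $|\bar{E}_n|,|V_n|\to\infty$) and each such edge has diverging multiplicity (forcing $|E_n|\to\infty$). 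I do not expect a genuine obstacle; the only step that is not essentially mechanical is the covariance estimate for $|V_n|$, where the dependence among the active-vertex indicators must be absorbed via the disjointness of the ``active-through-a-single-edge'' events, and the point of the lemma is precisely that it reduces an almost-sure equivalence of dependent random sequences to a deterministic first-moment computation.
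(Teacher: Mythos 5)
Your core computation is sound, but it proves a different statement from the one in the lemma, and the sentence that bridges the two (``integrating out $\mathcal{W}$ gives the claim'') is where the argument fails. Everything you establish---the conditional variance bounds, including the nice covariance estimate for the active-vertex indicators via disjointness of the ``active only through $\{i,i'\}$'' events, the Chebyshev/Borel--Cantelli step along the subsequence $n_k$, and the monotonicity sandwich---is carried out conditionally on $\mathcal{W}$ and yields $Y_n \overset{\text{a.s.}}{\sim} \mbe[Y_n \mid \mathcal{W}]$. The lemma, however, asserts $Y_n \overset{\text{a.s.}}{\sim} \mbe[Y_n]$ with the \emph{unconditional} expectations: these are precisely the deterministic sequences computed in \Cref{prop:moments} and then analyzed via Poissonization in \Cref{lem:poisseqreg,lem:mainlem}, and the text preceding the lemma emphasizes that the whole point is to reduce to deterministic sequences. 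Integrating out $\mathcal{W}$ does not convert your conclusion into this one, because $\mbe[Y_n\mid\mathcal{W}]/\mbe[Y_n]$ does not tend to $1$: for the multigraph edges, $\mbe[|E_n|\mid\mathcal{W}] = \tfrac{n}{2}\sum_{i\neq j} w_i w_j$ while $\mbe[|E_n|] = \tfrac{n}{2}\bigl(\int w\,\nu(\mathrm{d}w)\bigr)^2$, and their ratio is a nondegenerate random variable (constant in $n$), since $\sum_i w_i$ is random with variance $\int w^2\nu(\mathrm{d}w)>0$. So your argument delivers $|E_n|/\mbe[|E_n|] \to \bigl(\sum_{i\neq j}w_iw_j\bigr)/\bigl(\int w\,\nu(\mathrm{d}w)\bigr)^2$ a.s., not the claimed equivalence; to reach any statement about the unconditional means you would need an additional comparison of $\mbe[Y_n\mid\mathcal{W}]$ with $\mbe[Y_n]$ (for $|V_n|$ and $|\bar E_n|$ this amounts to redoing the asymptotic analysis conditionally), and at best that salvages same-order statements rather than asymptotic equivalence.

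For comparison, the paper's proof works directly with the unconditional mean and a stronger concentration tool: it writes $X_n$ as a countable sum of indicators, invokes a Freedman-type exponential bound $P\bigl(|X_n-\mbe(X_n)|>\epsilon\,\mbe(X_n)\bigr) \le 2\exp\bigl(-\epsilon^2\mbe(X_n)/2\bigr)$, uses $\mbe(X_n)\ge n^\beta$, and applies Borel--Cantelli along the full sequence---no subsequence or monotonicity argument is needed because the exponential tail is summable. Your Chebyshev-plus-subsequence route is a legitimate substitute for that concentration step, and your conditioning makes the independence structure transparent (indeed, a careful application of Freedman's inequality also goes through conditional expectations, so the conditional-versus-unconditional distinction is exactly the delicate point here). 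The genuine gap in your write-up is solely the tacit identification of $\mbe[Y_n\mid\mathcal{W}]$ with $\mbe[Y_n]$ in the final step; as the $|E_n|$ computation shows, that identification is not available, so the statement you have proved is strictly weaker than the one claimed.
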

Thus, we can examine the asymptotic behavior of the random numbers of edges and vertices by examining
the asymptotic behavior of their expectations, which are provided by \Cref{prop:moments}.
\begin{nlem}\label{prop:moments}
The expected numbers of vertices and edges for the multi- and binary graphs are
\begin{align*}
\mbe\left(|\bar V_n|\right) = \mbe\left(|V_n|\right)
          &= \int \left[1 - \exp\left(-\int (1- (1-wv)^n)
          \nu(\mathrm{d}v)\right) \right]
        \nu(\mathrm{d}w),\\
\mbe\left(|E_n|\right) = \frac{n}{2}\iint wv \, \nu(\mathrm{d}w)\nu(\mathrm{d}v),
    &\qquad   \mbe\left(|\bar E_n|\right) = \frac{1}{2}\iint (1 - (1-wv)^n)
    \, \nu(\mathrm{d}w)\nu(\mathrm{d}v).
\end{align*}
\end{nlem}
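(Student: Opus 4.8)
\emph{Proof plan.} The strategy is to condition on the random rate collection $\mathcal{W}$, evaluate the conditional expectations of $|V_n|$, $|E_n|$, and $|\bar E_n|$ in closed form using the binomial law of the multiplicities $M_{\{i,j\}}$ together with their conditional independence given $\mathcal{W}$, and then integrate out $\mathcal{W}$ using Campbell- and Mecke-type identities for Poisson processes. Every integrand that appears below is nonnegative, so Tonelli's theorem licenses each interchange of sum, integral, and expectation without further work; the hypotheses $\int_0^1 w\,\nu(\mathrm{d}w)<\infty$ and $\nu([0,1])=\infty$ are used only at the end, to confirm that the three expressions are finite.

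I would handle the edge counts first, as they are the simpler case. Conditionally on $\mathcal{W}$ we have $\mbe[M_{\{i,j\}}\mid\mathcal{W}] = n\,w_iw_j$ and $\mathbb{P}(M_{\{i,j\}}>0\mid\mathcal{W}) = 1-(1-w_iw_j)^n$. Substituting these into $|E_n| = \tfrac12\sum_{i\neq j}M_{\{i,j\}}$ and $|\bar E_n| = \tfrac12\sum_{i\neq j}\mathds{1}(M_{\{i,j\}}>0)$, taking conditional expectations, and then using the fact that the second factorial moment measure of a Poisson process with intensity $\nu$ is the product measure $\nu\otimes\nu$ (equivalently, applying the Mecke equation twice; here $\mathcal{W}$ is a.s. simple since $\nu$ is nonatomic, so $\sum_{i\neq j}$ really is a sum over ordered pairs of distinct atoms) turns these sums into the double integrals in the statement. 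Finiteness is immediate: $\iint wv\,\nu(\mathrm{d}w)\nu(\mathrm{d}v)=\bigl(\int w\,\nu(\mathrm{d}w)\bigr)^2<\infty$, and $1-(1-wv)^n\le nwv$ bounds the other integrand.

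The vertex count is the heart of the argument. Conditionally on $\mathcal{W}$, vertex $i$ is active iff at least one incident edge has positive multiplicity; since the $M_{\{i,j\}}$ are conditionally independent and the self-loop rate $w_{\{i,i\}}=0$ contributes a trivial factor, $\mathbb{P}(i\text{ inactive}\mid\mathcal{W}) = \prod_{j\neq i}(1-w_iw_j)^n$, so that
\[
\mbe\bigl[|V_n| \mid \mathcal{W}\bigr] = \sum_{w_i\in\mathcal{W}}\left(1 - \prod_{w_j\in\mathcal{W}\setminus\{w_i\}}(1-w_iw_j)^n\right).
\]
Next I apply the Slivnyak--Mecke formula to pull the expectation over $\mathcal{W}$ through the sum over atoms, converting it into an integral against $\nu$ with a fresh Poisson process in the summand:
\[
\mbe[|V_n|] = \int \mbe\!\left[1 - \prod_{v\in\mathcal{W}}(1-wv)^n\right]\nu(\mathrm{d}w).
\]
Finally, the probability generating functional of the Poisson process, $\mbe\bigl[\prod_{v\in\mathcal{W}}g(v)\bigr] = \exp\bigl(-\int(1-g(v))\,\nu(\mathrm{d}v)\bigr)$, applied with $g(v)=(1-wv)^n$, evaluates the inner expectation and yields the claimed formula; this identity is valid here because $\int(1-(1-wv)^n)\,\nu(\mathrm{d}v)\le nw\int v\,\nu(\mathrm{d}v)<\infty$, which also shows $\mbe[|V_n|]<\infty$.

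The step I expect to require the most care is the vertex computation: getting the Mecke/Palm bookkeeping right---in particular that the product of incident-edge survival probabilities ranges over $\mathcal{W}\setminus\{w_i\}$, and that re-inserting a deterministic atom $w$ into a Poisson process is harmless---and checking the integrability condition needed to invoke the probability generating functional identity. The edge-count identities are routine once the factorial-moment-measure formula for Poisson processes is in hand.
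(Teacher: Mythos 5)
Your proposal is correct and follows essentially the same route as the paper's proof: condition on $\mathcal{W}$, compute the conditional binomial moments, and integrate out the Poisson process via Slivnyak--Mecke together with the Laplace/probability generating functional (the paper invokes this as Campbell's theorem in exponential form, which is the same identity you use, and your factorial-moment-measure step for the edge counts is exactly the paper's repeated Slivnyak--Mecke application). Your added attention to Tonelli and the integrability condition for the generating functional is a sound, if minor, refinement of the paper's argument.
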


\paragraph{Sparsity}
We are now equipped to characterize the sparsity of this random graph sequence:
\begin{theorem}\label{thm:asymp}
Suppose $\nu$ has a regularly varying tail, i.e.,
there exist $\alpha \in (0, 1)$ and $\ell : \mathbb{R}_+ \to \mathbb{R}_+$
s.t.
\begin{align*}
\int_x^1\nu(\mathrm{d} w) &\sim x^{-\alpha}\ell(x^{-1}), \quad x\to 0
\qquad\text{and}\qquad
\forall c > 0,\,\,  \lim_{x\rightarrow\infty}
\frac{\ell(cx)}{\ell(x)} = 1.
\end{align*}
Then as $n\to\infty$,
\begin{align*}
\hspace{-.2cm}|V_n| \eqas \Theta(n^\alpha\ell(n)),
    && |E_n| \eqas \Theta(n),
    && |\bar E_n|
    \eqas O\left(\ell({n}^{1/2})\min\left(n^{\frac{1+\alpha}{2}},\ell(n)n^{\frac{3\alpha}{2}}\right)\right).
\end{align*}
\end{theorem}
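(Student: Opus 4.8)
The plan is to reduce all three statements to their deterministic first moments using \Cref{lem:xvsex}, and then read off the asymptotic order of each of the integrals supplied by \Cref{prop:moments} with the help of classical results on regularly varying functions (Karamata's theorem and Potter's bounds). The claim for $|E_n|$ is immediate: by the standing hypotheses on $\nu$, $\mu_1 := \int_0^1 w\,\nu(\mathrm{d}w)$ lies in $(0,\infty)$, so $\mbe(|E_n|) = \frac{n}{2}\mu_1^2 = \Theta(n)$, and \Cref{lem:xvsex} gives $|E_n| \eqas \Theta(n)$.

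For $|V_n|$, I would first control the inner integral $g_n(w) := \int_0^1(1 - (1-wv)^n)\,\nu(\mathrm{d}v)$ appearing in the formula for $\mbe(|V_n|)$. The elementary inequalities $\tfrac12(1\wedge nx) \le 1-(1-x)^n \le 1\wedge nx$ on $[0,1]$, combined with the hypothesis $\int_x^1 \nu(\mathrm{d}v) \sim x^{-\alpha}\ell(x^{-1})$ and the Karamata consequence $\int_0^x v\,\nu(\mathrm{d}v) \sim \frac{\alpha}{1-\alpha}x^{1-\alpha}\ell(x^{-1})$ (both as $x \to 0$), give $g_n(w) \le \mu_1 n w$ for all $w$, $g_n(w) \asymp n w$ when $nw$ is bounded, and $g_n(w) \to \infty$ as $nw \to \infty$. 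Since $g_n$ is nondecreasing in $w$, the set $\{w : g_n(w) \ge 1\}$ is an interval $[w_n^\ast, 1]$ with $w_n^\ast \asymp 1/n$. Splitting $\mbe(|V_n|) = \int_0^1(1 - e^{-g_n(w)})\,\nu(\mathrm{d}w)$ at $w_n^\ast$, on $[w_n^\ast,1]$ one bounds $1 - e^{-g_n(w)}$ between $1-e^{-1}$ and $1$, contributing a quantity of order $\nu([w_n^\ast,1]) \asymp n^\alpha\ell(n)$; on $(0,w_n^\ast)$ one bounds $1 - e^{-g_n(w)} \le g_n(w) \le \mu_1 n w$ and uses Karamata once more to see the contribution is $O(n\int_0^{w_n^\ast} w\,\nu(\mathrm{d}w)) = O(n^\alpha\ell(n))$. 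Hence $\mbe(|V_n|) = \Theta(n^\alpha\ell(n))$, and \Cref{lem:xvsex} upgrades this to $|V_n| \eqas \Theta(n^\alpha\ell(n))$.

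For $|\bar E_n|$ only an upper bound is needed. Using $1-(1-x)^n \le 1\wedge nx \le (nx)^\gamma$ for every $\gamma \in [0,1]$ and $x\in[0,1]$, together with the formula for $\mbe(|\bar E_n|)$, one gets $\mbe(|\bar E_n|) = \frac12\iint(1-(1-wv)^n)\,\nu(\mathrm{d}w)\nu(\mathrm{d}v) \le \frac12\, n^\gamma(\int_0^1 v^\gamma\,\nu(\mathrm{d}v))^2$. The point is that $\int_0^1 v^\gamma\,\nu(\mathrm{d}v) < \infty$ for every $\gamma \in (\alpha,1)$: integrating by parts and using $\int_v^1\nu(\mathrm{d}w) = O(v^{-\alpha-\varepsilon})$ for small $\varepsilon$ (slow variation of $\ell$), the boundary term at $0$ vanishes and the remaining integral converges exactly because $\gamma > \alpha$. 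Thus $\mbe(|\bar E_n|) = O(n^\gamma)$ for every $\gamma \in (\alpha,1)$. Choosing $\gamma$ slightly below $\frac{1+\alpha}{2}$ gives $\mbe(|\bar E_n|) = O(n^{(1+\alpha)/2})$, and, when $\alpha < 2/3$ (so $\frac{3\alpha}{2} < 1$), choosing $\gamma$ slightly below $\frac{3\alpha}{2}$ gives $\mbe(|\bar E_n|) = O(n^{3\alpha/2})$; since a strictly sub-critical polynomial rate is $o$ of any slowly varying multiple of the critical one, each of these is in particular $O(\ell(n^{1/2})\,n^{(1+\alpha)/2})$ and $O(\ell(n^{1/2})\,\ell(n)\,n^{3\alpha/2})$ respectively (and for $\alpha \ge 2/3$ the second branch of the minimum is never the smaller and can be dropped). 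Taking the minimum and invoking \Cref{lem:xvsex} gives the stated bound on $|\bar E_n|$.

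The only genuinely routine step is $|E_n|$. The $|V_n|$ argument is a standard regular-variation computation whose main subtlety is making the Karamata asymptotics for $g_n$ and for $\int_0^x v\,\nu(\mathrm{d}v)$ uniform enough to pin down $w_n^\ast$ and to control the two pieces of the outer integral simultaneously; Potter's bounds on $\ell$ are the tool for this. For $|\bar E_n|$ the main obstacle is conceptual rather than computational: one has to recognize that a single fractional-moment inequality, with the exponent tuned just below the relevant critical value, already delivers both branches of the minimum, and, relatedly, that no matching lower bound on the number of distinct edges is available --- which is why the result for $|\bar E_n|$ is only an $O$ bound and why the two regimes $\alpha < 1/2$ and $\alpha > 1/2$ appear in the statement.
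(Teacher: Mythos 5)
Your argument is correct, and it reaches the stated conclusions by a genuinely different route than the paper. The paper never works with $1-(1-wv)^n$ directly: it first \emph{Poissonizes}, replacing the binomial multiplicities by Poisson ones (so $1-(1-wv)^n$ becomes $1-e^{-nwv}$, \Cref{lem:poissmoments}), proves the two families of expectations are asymptotically equivalent (\Cref{lem:poisseqreg}), and then extracts the asymptotics of the Poissonized integrals (\Cref{lem:mainlem}) by splitting at $n^{-1}$ (vertices) and at $n^{-1/2}$ (binary edges), the latter via linearization, integration by parts, and an Abelian theorem for Laplace transforms; the final step is the same appeal to \Cref{lem:xvsex}. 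You instead sandwich $1-(1-x)^n$ between constant multiples of $1\wedge nx$ and work with the exact formulas of \Cref{prop:moments}: for $|V_n|$ your split at the threshold $w_n^\ast\asymp n^{-1}$ (whose location is pinned down by $g_n(c/n)\le c\int v\,\nu(\mathrm{d}v)$ and monotone convergence of $\int(1\wedge Cv)\,\nu(\mathrm{d}v)\to\infty$) reproduces the paper's $\Theta(n^\alpha\ell(n))$ with less machinery, and monotonicity of the tail $\bar\nu$ makes the uniformity issue you worry about essentially automatic. For $|\bar E_n|$ your fractional-moment bound $1\wedge t\le t^\gamma$, $\gamma\in(\alpha,1]$, gives $\mathbb{E}(|\bar E_n|)=O(n^\gamma)$ for every $\gamma>\alpha$, which is in fact \emph{strictly stronger} than the theorem's bound, since both exponents $\tfrac{1+\alpha}{2}$ and $\tfrac{3\alpha}{2}$ exceed $\alpha$ and a strict polynomial gap absorbs any slowly varying factors; your handling of the minimum (bounding against each branch when $\tfrac{3\alpha}{2}<1$, and dropping the second branch when $\alpha\ge\nicefrac{2}{3}$ because it is then asymptotically the larger one) is valid since the claim is only an $O$ statement. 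What each approach buys: the paper's Poissonization plus Slivnyak--Mecke/Campbell/Abelian toolkit yields explicit asymptotic expressions and extends naturally to related functionals, at the cost of two extra lemmas; your route is more elementary, avoids Poissonization entirely, and for the binary edge count shows the paper's stated rate is not tight (e.g., for an exact Pareto-type tail the true order is $n^\alpha\log n$). One small caution: your closing remark attributing the two regimes in the minimum to the absence of a lower bound is off the mark---the $\min$ reflects the paper's two separate upper-bound computations, and the $\alpha\gtrless\nicefrac{1}{2}$ threshold in the discussion concerns multigraph sparsity, not $|\bar E_n|$.
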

\Cref{thm:asymp} implies that the multigraph is sparse when $\alpha \in (\nicefrac{1}{2}, 1)$, and that
the restriction to the binary graph is sparse for any $\alpha \in (0, 1)$.
See Remark~\ref{proof-remark} for a discussion.
Thus, edge-exchangeable random graph sequences
allow for a wide range of sparse and dense behavior.

\section{Simulations}
\label{sec:simulations}

In this section, we explore the behavior of graphs generated by the model from \Cref{sec:poissp} via simulation,
with the primary goal of empirically demonstrating that the model produces sparse graphs.
We consider the case when the Poisson process generating the weights in \Cref{eq:graphmodel} has the rate measure of
a \emph{three-parameter beta process} (3-BP)
on $(0,1)$ \citep{teh2009indian, MR2934958}:
\begin{align}
    \nu(dw) =
    \gamma \frac{\Gamma(1+\beta)}{\Gamma(1-\alpha)\Gamma(\alpha+\beta)} w^{-1-\alpha}
    (1 - w)^{\alpha+\beta-1}\, dw,
\end{align}
with mass $\gamma > 0$, concentration $\beta > 0$, and discount $\alpha \in (0, 1)$.
In order for the 3-BP to have finite total mass $\sum_j w_j < \infty$,
we require that $\beta > -\alpha$.
We draw realizations of the weights from a $\text{3-BP}(\gamma,\beta,\alpha)$ according to the
stick-breaking representation given by \citet*{MR2934958}.
That is, the $w_i$ are the atom weights of the measure ${W}$
for
\begin{align*}
{{W}}  &= \sum_{i=1}^\infty \sum_{j=1}^{C_i} V_{i,j}^{(i)}
\prod_{l=1}^{i-1} (1 - V_{i,j}^{(\ell)})\delta_{\psi_{i,j}},  &
C_i               &\iid \text{Pois}(\gamma), \\
V_{i,j}^{(\ell)}  &\indep \text{Beta}(1-\alpha, \beta + \ell\alpha), &
\psi_{i,j}        &\iid  B_0
\end{align*}
and any continuous (i.e., non-atomic) choice of distribution $B_0$.

Since simulating an infinite number of atoms is not possible,
we truncate the outer summation in $i$ to 2000
rounds, resulting in $\sum_{i=1}^{2000} C_i$ weights.
The parameters of the beta process were fixed to $\gamma=3$ and $\theta=1$, as they do not influence
the sparsity of the resulting graph frequency model, and
we varied the discount parameter $\alpha$.
Given a single draw {${W}$} (at some specific discount $\alpha$),
we then simulated the edges of the graph,
where the number of Bernoulli draws $N$ varied between 50 and 2000.

\Cref{fig:type1multi} shows how the number of edges varies versus
the total number of active vertices for the multigraph, with different
colors representing different
random seeds.
To check whether the generated graph was sparse,
we determined the exponent
by examining the slope of the data points
(on a log-scale).
In all plots, the black dashed line is a line with
slope 2.
In the multigraph, we found that for
the discount parameter settings $\alpha = 0.6, 0.7$,
the slopes were below 2;
for $\alpha = 0, 0.3$, the slopes
were greater than 2.
This corresponds to our theoretical results; for $\alpha < 0.5$ the multigraph is dense with slope greater than 2,
and for $\alpha > 0.5$ the multigraph is sparse with slope less than 2.
Furthermore, the sparse graphs exhibit \emph{power law} relationships between the number
of edges and vertices, i.e.,
$|{E}_N| \simas c\, |{V}_N|^b, \, N \rightarrow \infty$, where $b \in
(1,2)$,
as suggested by the linear relationship
in the plots
between the quantities on a log-scale.
Note that there are necessarily fewer edges in the binary graph than in the multigraph,
and thus this plot implies that the binary graph frequency model can also capture sparsity.
\Cref{fig:type1bin} confirms this observation; it shows how the number of edges
varies with the number of active vertices for the binary graph.
In this case, across $\alpha \in (0,1)$,
we observe slopes
that are less than 2.
This agrees with our theory from  \Cref{sec:poissp}, which states that the binary graph is sparse
for any $\alpha \in (0, 1)$.

\begin{figure}
    \centering
    \begin{subfigure}[b]{\linewidth}
        \centering
        \includegraphics[scale=0.27]{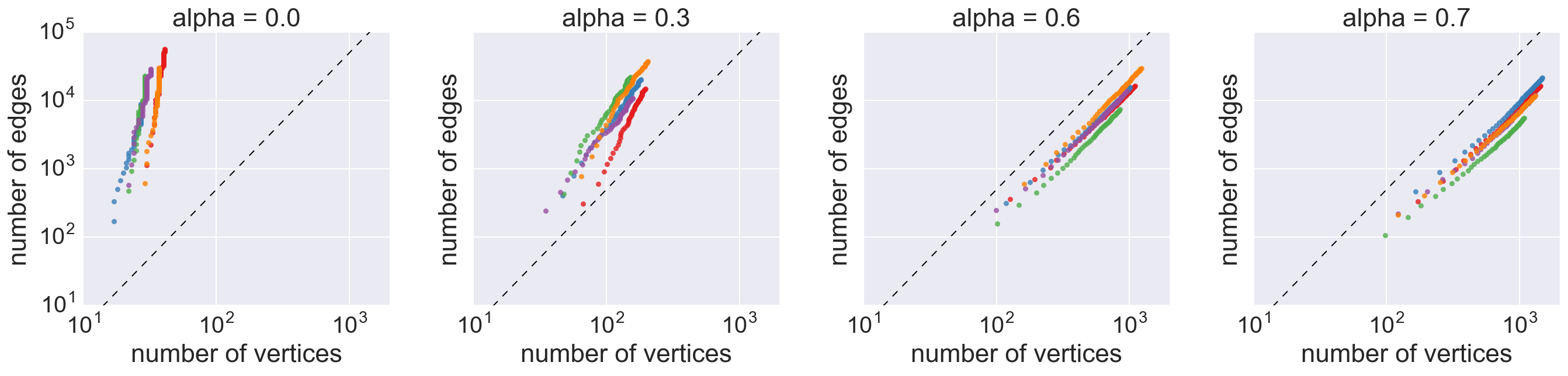}
        \caption{Multigraph edges vs.~active vertices}\label{fig:type1multi}
    \end{subfigure}
    \begin{subfigure}[b]{\linewidth}
        \centering
        \includegraphics[scale=0.27]{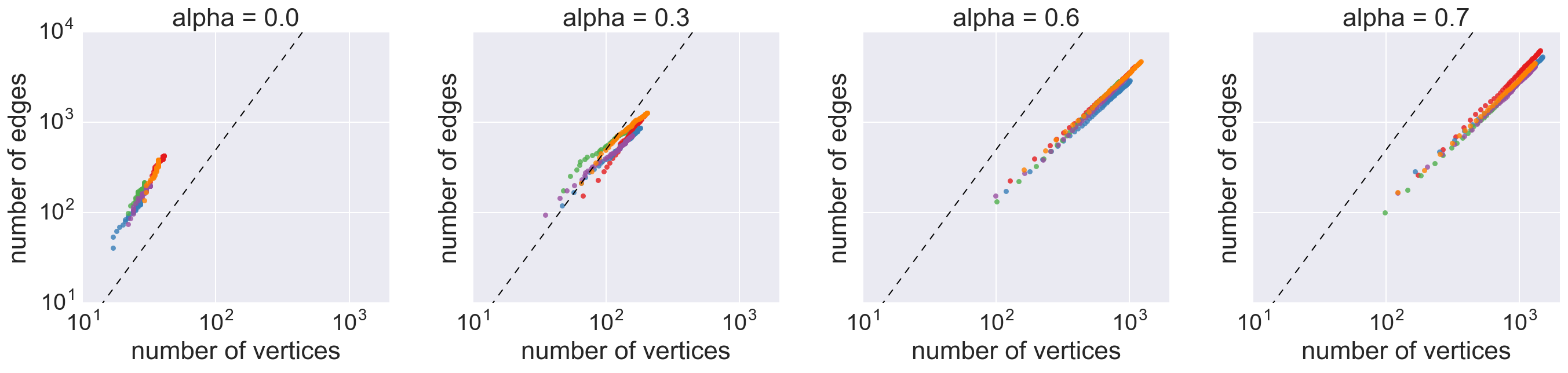}
        \caption{Binary graph edges vs.~active vertices}\label{fig:type1bin}
    \end{subfigure}
    \caption{
        Data simulated from a
        graph frequency model with weights generated
        according to a 3-BP.
        Colors represent different random draws.
        The dashed line has a slope of 2.
        }\label{fig:sims}
        \vspace{-15pt}
\end{figure}

\section{Conclusions}
We have proposed an alternative form of
exchangeability for random graphs, which we call \emph{edge exchangeability},
in which the distribution of a graph sequence is invariant to
the order of the edges. We have demonstrated that edge-exchangeable graph sequences, 
unlike traditional vertex-exchangeable sequences,
can be sparse by developing a class of 
edge-exchangeable graph frequency models that provably exhibit sparsity.
Simulations using edge frequencies drawn according to a
three-parameter beta process confirm our theoretical
results regarding sparsity. Our results suggest that a variety of future directions
would be fruitful---including theoretically
characterizing different types of power laws within graph frequency models,
characterizing the use of truncation within
graph frequency models as a means for approximate Bayesian inference
in graphs, and understanding the full range of distributions over sparse, edge-exchangeable
graph sequences.

\subsubsection*{Acknowledgments}
We would like to thank
Bailey Fosdick and
Tyler McCormick
for helpful conversations.

\appendix
\section{Overview}
In \Cref{app:vertex},
we provide more examples of graph models that are either vertex exchangeable
or Kallenberg exchangeable.
In \Cref{app:edge}, we establish characterizations of
edge exchangeability in graphs
via existing notions of exchangeability
for combinatorial structures such as random partitions
and feature allocations.
In \Cref{app:proofs}, we provide full proof details for the theoretical results
in the main text.

\section{More exchangeable graph models}
\label{app:vertex}

Many popular graph models are vertex exchangeable. These models
include the classic Erd\H{o}s--R\'enyi model \citep{MR0120167},
as well as Bayesian generative models for network data, such as
the stochastic block model \citep{MR718088},
the mixed membership
stochastic block model \citep{airoldi2008mixed},
the infinite relational model
\citep{C.Kemp:2006:53fd9, DBLP:conf/mlg/XuTYYK07},
the latent space model \citep{MR1951262},
the latent feature relational model \citep{DBLP:conf/nips/MillerGJ09},
the infinite latent attribute model \citep{DBLP:conf/icml/PallaKG12},
and the random function model \citep{DBLP:conf/nips/LloydOGR12}.
See
\citet{DBLP:journals/pami/OrbR14} and
\citet{DBLP:conf/nips/LloydOGR12} for more examples and discussion.

Recently, a number of extensions to the Kallenberg-exchangeable model
of
\citet{caron2014arxiv},
which builds on early work on bipartite
graphs by \citet{DBLP:conf/nips/Caron12},
have also been developed. These models include
extensions to stochastic block models \citep{herlau2015},
mixed membership stochastic block models \citep{todeschini2016arxiv},
and dynamic network models \citep{palla2016arxiv}.

\section{Characterizations of edge-exchangeable graph sequences}
\label{app:edge}

We introduced edge exchangeability,
a new notion of exchangeability for graphs.
Just as the Aldous-Hoover theorem provides
a characterization of the
distribution of vertex-exchangeable graphs,
it is desirable to provide a characterization
of edge exchangeability in graphs.
Below we show how characterization theorems that already
exist for other combinatorial structures can be readily applied
to provide characterizations for edge exchangeability in graphs.

We first develop mappings from edge-exchangeable graph sequences
to familiar combinatorial structures---such as partitions \citep{pitman:1995:exchangeable},
feature allocations \citep{broderick:2013:feature}, and trait allocations \citep{broderick:2014:posteriors,Campbell16b}---showing
that edge exchangeability in the graph corresponds to exchangeability in those structures.
In this manner, we
provide characterizations of the case
where one edge is added to the graph per step in \Cref{sec:part},
where multiple unique edges may be added per step in \Cref{sec:feat},
and where multiple (non)unique edges may be added in \Cref{sec:trait}.

A limitation of these connections is that it is not immediately clear how to recover
the connectivity in the graph from the mapped combinatorial object; for instance,
given a particular feature allocation, the graph to which it corresponds is not identifiable.
This issue has been addressed in a purely combinatorial context via \emph{vertex allocations}
and the \emph{graph paintbox} \citep{Campbell16b} using the general theory of trait allocations.
In \Cref{sec:labeledcomb}, we provide an alternative connection to \emph{ordered} combinatorial
structures \citep{broderick:2013:feature,Campbell16b} under the assumption that vertex labels are provided.
This assumption is often reasonable in the setting of network data where the vertices and edges are observed directly.
By contrast, it is unusual to assume that labels are provided for blocks in the case of partitions,
feature allocations, and trait allocations since, in these cases, the combinatorial structure is typically entirely latent in real data analysis
problems. For instance, in clustering applications, finding parameters that describe each cluster is usually part of the inference problem. In the graph case, though,
the use of an ordered structure identifies the particular pair of vertices corresponding to each edge in the graph, allowing
recovery of the graph itself.

\subsection{The step collection sequence and connections to other forms of
combinatorial exchangeability} \label{app:step_connections}

In order to analyze edge-exchangeable graphs using
the existing combinatorial machinery of
random
partitions, feature allocations, and trait allocations,
we introduce a new combinatorial structure,
the step collection sequence,
which can take the form of a sequence of partitions,
feature allocations, or trait allocations.
As we will now see, the step collection sequence can be
constructed from the step-augmented graph
sequence in the following way.

Suppose we assign a unique label $\phi$ to each pair of vertices.
Then if a pair of vertices is labeled $\phi$,
we may imagine that any particular edge between this pair of vertices
is assigned label $\phi$ when it appears. Let $\phi_j$ be the $j$th such
unique
edge label.

Recall that we consider a sequence of graphs defined by its
step-augmented edge sequence $E'_n$.
Let $S_j$ be the set of steps up to the current step $n$ in which
any edge labeled $\phi_j$ was added. If $m$ edges labeled $\phi_j$
were added in a single step $s$, $s$ appears in $S_j$ with multiplicity $m$.
So each element $s \in S_j$ is an element of $[n]$.
Let $K_n$ be the number of unique vertex pairs seen among edges introduced up until the current step $n$.
Then we may define $C_n$ to be the collection of step sets
across edges that have appeared by step $n$:
\begin{align*}
    C_n = \{ S_1, \ldots, S_{K_n} \}.
\end{align*}
Finally, we can define the
 \emph{step collection sequence}
$C = (C_1, C_2,\ldots)$
as the sequence of $C_n$ for $n=1,2,\ldots$.
Note that it is not clear how to recover the original edge connectivity of the graph
from the step collection sequence, or whether it is possible to modify the sequence (or the labels $\phi_j$) such
that it is easy to recover connectivity while maintaining the (non-trivial) connections to combinatorial
exchangeability provided in \Cref{sec:part,sec:feat,sec:trait} below.
\begin{example}
    Suppose we have the edge sequence
\begin{align*}
	E_1 &= \{\{2,3\}, \{3,6\}\}, \\
    E_2 &= \{\{2,3\}, \{3,6\}\}, \\
    E_3 &= \{\{2,3\}, \{3,6\}, \{6,6\}, \{3,6\}\}, \\
	E_4 &= \{\{2,3\},\{1,4\},\{3,6\}, \{6,6\}, \{3,6\} \},
\end{align*}
with step-augmentation
$$
	E'_4 = \{(\{2,3\},1),(\{1,4\},4),(\{3,6\},1), (\{6,6\},3), (\{3,6\},3)\}
$$
for $E_4$.
Now we label the unique edges in $E'_n$.
Using an order of appearance scheme
\cite{broderick:2013:feature} to index the labels,
$E'_4$ becomes
\begin{align*}
\{(\phi_1,1), (\phi_2,1), (\phi_3,3), (\phi_1,3), (\phi_4,4)\},
\end{align*}
where the labels $\phi_j$ correspond to the four unique vertex pairs:
$\phi_1 = \{3,6\},
\phi_2 = \{2,3\},
\phi_3 = \{6,6\},
\phi_3 = \{1,4\}$.
The step collection
sequence for $C_1,\ldots,C_4$ is
\begin{align*}
    C_1 = \{  \underbrace{\{1\}}_{\phi_1}  \}, \qquad
    C_2 = \{  \underbrace{\{1\}}_{\phi_1}  \}, \qquad
    C_3 = \{  \underbrace{\{1, 3\}}_{\phi_1}, \underbrace{\{3\}}_{\phi_3}  \},
    \qquad
    C_4 = \{  \underbrace{\{1, 3\}}_{\phi_1}, \underbrace{\{3\}}_{\phi_3},
    \underbrace{\{4\}}_{\phi_4}   \}.
\end{align*}
Here each element of $C_n$ is a set corresponding
to one of the four unique labels $\phi_j$ and
contains all step indices up to step $n$
in which an edge with that label was added to
the graph sequence.
\end{example}

To see that the step collection sequence can be interpreted
as a familiar combinatorial object, we recall the following definitions.
A \emph{partition} $C_n$ of $[n]$ is a set
$\{S_1, \ldots, S_{K_n}\}$
whose blocks, or \emph{clusters},
are mutually exclusive, i.e.,
$S_i \cap S_j = \emptyset, i\neq j$,
and exhaustive, i.e.,
$\bigcup_j S_j = [n]$.
Feature allocations relax the definition of partitions
by no longer requiring the blocks to be mutually
exclusive and exhaustive.
A \emph{feature allocation} $C_n$ of $[n]$
is a multiset $\{S_1, \ldots, S_{K_n}\}$
of subsets of $[n]$, such that any datapoint in $[n]$
occurs in finitely many \emph{features} $S_j$ \citep{broderick:2013:feature}.
A \emph{trait allocation} generalizes
the feature allocation where now each
$S_j$, called a \emph{trait},
may itself be a multiset \citep{broderick:2014:posteriors,Campbell16b}.

We see that the step collection $C_n$ can be interpreted
as follows.
If a single edge is added to the graph at each round, $C_n$
is a partition of $[n]$, and the step collection
 sequence is
a projective partition sequence. If at most one edge is added between any pair
of vertices at each step, $C_n$ is a feature allocation of $[n]$, and the step collection sequence
is a projective sequence of feature allocations.
In the most general case, when multiple edges may be added between
any pair of vertices at each step, $C_n$ is a trait allocation of $[n]$, and the
step collection sequence is a projective sequence of trait allocations.

In the following examples, corresponding to
Figure~\ref{fig:partition},
we show different step collection sequences
that correspond to a partition, a feature allocation, and a trait allocation.

\begin{example}[Partition]
Consider the step collection
$C_5 = \{\{1,3\},\{2\},\{4\},\{5\}\}$.
The edges form a partition of the steps.
Here exactly one edge arrives in each step.
\end{example}
\begin{example}[Feature allocation]
Consider the step collection
$C_5 = \{\{1,3\},\{1\},\{1,5\},\{3,4\}\}$.
This step collection forms a feature allocation of the steps.
Thus in this case, there may be multiple \emph{unique} edges arriving in each
step.
\end{example}
\begin{example}[Trait allocation]
In a trait allocation, there may be multiple edges (not necessarily unique) at each step.
Consider the step collection
$C_5 = \{\{1,3,3,3\},\{1\},\{1,5\},\{3\},\{4,4\}\}$.
This collection
forms a trait allocation of the steps, where elements of $C_5$ are now multisets.
\end{example}

\begin{figure}
    \centering
    \begin{subfigure}[b]{0.3\linewidth}
        \centering
        \includegraphics[scale=0.35]{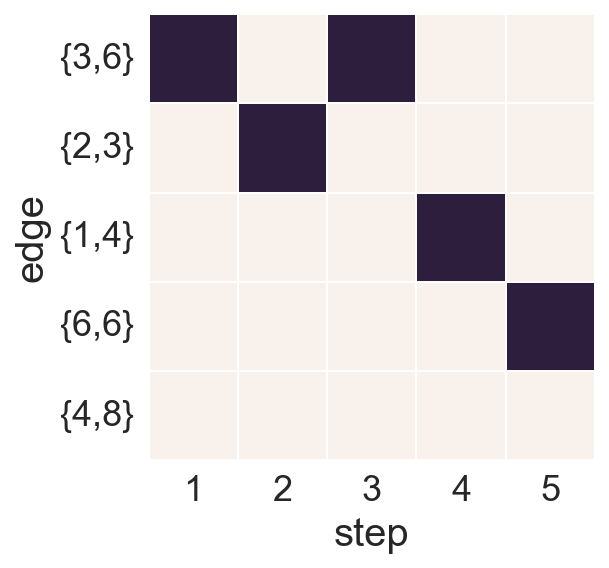}
        \caption{Partition}
    \end{subfigure}
    \begin{subfigure}[b]{0.3\linewidth}
        \centering
        \includegraphics[scale=0.35]{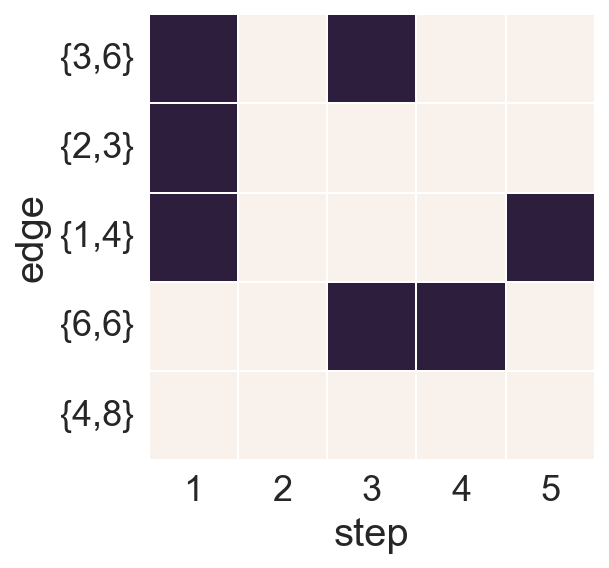}
        \caption{Feature allocation}
    \end{subfigure}
    \begin{subfigure}[b]{0.3\linewidth}
        \centering
        \includegraphics[scale=0.35]{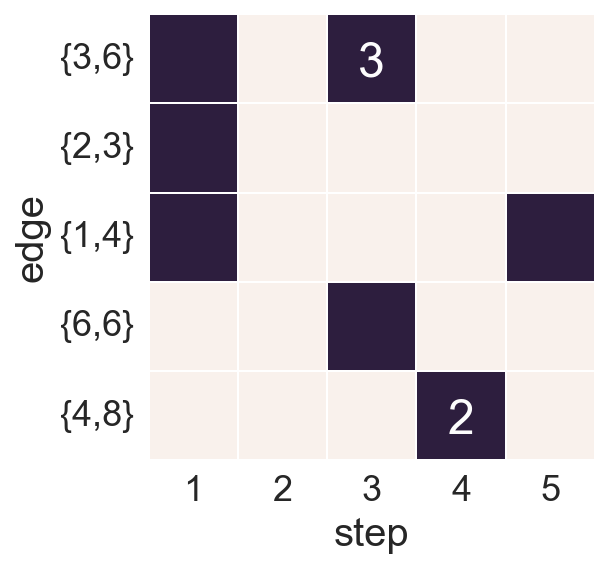}
        \caption{Trait allocation}
    \end{subfigure}
    \caption{Connection of edge-exchangeable graphs with partitions, feature
        allocations, and trait allocations.
        Light blocks represent 0, dark blocks either represent 1 or the
        specified count.
        In a partition, exactly one edge arrives in each step.
        In a feature allocation, multiple edges may arrive at each step, but at most one edge arrives between any two vertices at each step.
        In a trait allocation, there may be multiple edges of any type.
    }
    \label{fig:partition}
\end{figure}

In this section, we have connected certain types of edge-exchangeable graphs
to partitions and feature allocations. In the next two sections, we make use of
known characterizations of these combinatorial objects to characterize edge exchangeability in graphs.

\subsubsection{Partition connection} \label{sec:part}

First consider the connection to partitions.
In this case, suppose that each index in $[n]$ appears exactly once across all
of the subsets of $C_n$. This assumption on $C_n$ is equivalent to assuming that in the original
graph sequence $E_1, E_2, \ldots$, we have that $E_{n+1}$ always has exactly one
more edge than $E_{n}$. In this case, $C_n$ is exactly a
\emph{partition} of $[n]$; that is, $C_n$ is a set of mutually exclusive and
exhaustive subsets of $[n]$. If the edge sequence $(E_n)$ is random,
then $(C_n)$ is random as well.

We say that a partition sequence $C_1, C_2, \ldots$,
where $C_n$ is a (random) partition of $[n]$ and $C_m \subseteq C_n$
for all $m \le n$, is infinitely exchangeable if, for all $n$,
permuting the indices in $[n]$ does not change the distribution
of the (random) partitions \citep{pitman:1995:exchangeable}. Permuting the indices $[n]$
in the partition sequence $(C_m)$
corresponds to permuting the order in which edges are added in our graph
sequence $(E_m)$.
As an example of a model that generates a step collection sequence corresponding
to a partition sequence, consider the frequency model
we introduced in \Cref{sec:generative} where the weights are normalized.
At each step, we choose a single edge according the resulting probability distribution
over pairs of vertices.

Given this connection to exchangeable partitions, the \emph{Kingman paintbox theorem} \citep{kingman:1978:representation}
provides a characterization of edge exchangeability in graph sequences that introduce one edge per step:
in particular, it guarantees that a graph sequence that adds exactly one edge per step
is edge exchangeable if and only if the associated step collection
sequence $(C_n)$ has a Kingman paintbox representation.
An alternate characterization of
edge exchangeability in graph sequences that
introduce one edge per step is provided by
\emph{exchangeable partition probability functions (EPPFs)} \citep{pitman:1995:exchangeable}.
In particular, a graph sequence that introduces one edge per step
is edge-exchangeable if and only if the marginal distribution of
$C_n$ (the step collection at step $n$)
is given by an EPPF
for all $n$.

\subsubsection{Feature allocation connection} \label{sec:feat}

Next we notice that it need not be the case that exactly one edge is added at
each step of the graph sequence, e.g.\ between $E_{n}$ and $E_{n+1}$. If we
allow multiple unique edges at any step, then the step collection $C_n$ is just a set
of subsets of $[n]$, where each subset has at most one of each index in $[n]$.
Suppose that any $m$ belongs to only finitely many subsets in $C_n$ for any $n$.
That is, we suppose that only finitely many edges are added to the graph at any
step. Then $C_n$ is an example of a \emph{feature allocation}
\citep{broderick:2013:feature}. Again, if $(E_n)$ is random, then $(C_n)$ is random as well.

We say that a (random) feature allocation sequence $(C_m)$ is
infinitely exchangeable if, for any $n$, permuting the indices of $[n]$ does not
change the distribution of the (random) feature allocations
\cite{broderick:2013:cluster, broderick:2013:feature}.
Permuting the indices $[n]$ in the sequence $(C_m)$ corresponds to
permuting the steps when edges are added in the edge sequence $(E_m)$.
Consider the following example of
a graph frequency model
that produces a step collection sequence
corresponding to an exchangeable feature allocation.
For $n=1,2,\ldots$, we draw whether the
graph has an edge $\{i,j\}$ at time step $n$ as
Bernoulli with probability
$w_{\{i,j\}} = w_i w_j$.
Thus, in each step, we draw at most one edge per unique vertex pair. But we may draw multiple
edges in the same step.

Similarly to the partition case in \mysec{part}, we can apply known results from feature allocations to characterize edge exchangeability in graph
models of this form. For instance, we know that the \emph{feature paintbox} \cite{broderick:2013:feature,Campbell16b} characterizes distributions
over exchangeable feature allocations (and therefore the step collection sequence for graphs of this form)
just as the Kingman paintbox characterizes distributions
over exchangeable partitions (and therefore the step collection sequence for edge-exchangeable graphs with exactly one new edge per step).

We may also consider feature paintbox distributions with extra structure. For instance, the step collection sequence is said to have
an \emph{exchangeable feature probability function} (EFPF) \citep{broderick:2013:feature} if the probability of each step collection $C_n$ in the sequence
can be expressed as a function only of the total number of steps $n$ and the subset sizes within $C_n$ (i.e.~the edge multiplicities in the graph), and is symmetric in the subset sizes.
As another example, the step collection sequence is said to have a \emph{feature
frequency model} if there exists a (random) sequence of probabilities $(w_j)_{j=1}^\infty$ associated with edges $j=1, 2, \dots$ and a number $\lambda > 0$,
conditioned on which the step collection sequence arises from the graph built by adding edge $j$ at each step independently\footnote{This is conditional independence since the $(w_j)$ may be random.} with probability $w_j$ for all values of $j\in\mathbb{N}$, along with an additional
$\mathrm{Poiss}(\lambda)$ number of edges that never share a vertex with any other edge in the sequence. In other words, the graph is constructed with a graph frequency model as in the main text of the present work
(modulo the aforementioned additional Poisson number of edges).
Theorem 17 (``Equivalence of EFPFs and feature frequency models'') from
\cite{broderick:2013:feature} shows that these two examples are actually equivalent: if the step collection sequence has an EFPF, it has a feature frequency model, and vice versa.

\subsubsection{Further extensions} \label{sec:trait}

Finally, we may consider the case where at every step, any non-negative (finite)
number of edges may be added \emph{and} those edges may have non-trivial
(finite) multiplicity; that is, the multiplicity of any edge at any step can be any
non-negative integer. By contrast, in \mysec{feat}, each unique edge occurred at most
once at each step. In this case, the step collection $C_n$ is a set of subsets of $[n]$.
The subsets need not be unique or exclusive since we assume any number of edges may be
added at any step. And the subsets themselves are multisets since an edge
may be added with some multiplicity at step $n$. We say that $C_n$ is a
\emph{trait allocation},
which we define as a generalization of a feature allocation
where the subsets of $C_n$ are multisets. As above, if $(E_n)$ is random, $(C_n)$ is as well.

We say that a (random) trait allocation sequence $(C_m)$ is infinitely exchangeable
if, for any $n$, permuting the indices of $[n]$ does not change the distribution
of the (random) trait allocation. Here, permuting the indices of $[n]$
corresponds to permuting the steps when edges are added in the edge
sequence $(E_m)$.
A graph frequency model
that generates a step collection sequence as a trait allocation sequence is
the multiple-edge-per-step
frequency model sampling procedure described in
\Cref{sec:generative}. Here, at each step, multiple edges can appear each
with multiplicity potentially greater than 1,
requiring the full generality of a trait allocation sequence.

\citet{Campbell16b} characterize exchangeable trait allocations via, e.g., probability functions and paintboxes
 and thereby provide a
characterization over the corresponding step collection
sequences of such edge-exchangeable graphs.

\subsection{Connections to exchangeability in ordered combinatorial structures}\label{sec:labeledcomb}
As noted earlier, it is not immediately clear how to recover
the connectivity in an edge-exchangeable graph from the step collection sequence, nor how to do so in a way
that preserves non-trivial connections to other exchangeable combinatorial structures.
\citet{Campbell16b} considers an alternative to the step collection sequence in which
the (multi)subsets in the combinatorial structure correspond to \emph{vertices} rather than edges, known as a \emph{vertex allocation}.
This allows for the characterization of edge-exchangeable graphs via the \emph{graph paintbox} using the general theory of
trait allocations, while maintaining an explicit representation of the structure of the graph, i.e., the connection between edges that share a vertex.

If we are willing to eschew the unordered nature of the step collection sequence, and assume that we have an a priori labeling on the vertices,
there is yet another alternative using the \emph{ordered step collection sequence}.
The availability of labeled vertices is often a reasonable assumption in the setting of network data, where the vertices and edges are typically observed directly.
Suppose the vertices are labeled using the natural numbers $1, 2, \dots$.
Then we can use the ordering of the vertex labels to order the vertex pairs in a diagonal manner, i.e.~$\{1, 1\}, \, \{1, 2\}, \, \{2, 2\}, \, \{1, 3\}, \, \{2, 3\}, \dots$. Note that, for the purpose of building this diagonal ordering, we consider the lowest-valued index in each vertex pair first.
We build the step collection sequence $(C_n)$ in the same manner as before, except that each step collection $C_n$ is no longer an unordered collection of subsets;
the subsets derive their order from the vertex pairs they represent. For example, if we observe edges at vertex pairs $\{1, 1\}$ and $\{1, 2\}$ at step 1, and edges at vertex pairs $\{1, 1\}$ and $\{2, 3\}$ at step 2,
then
\begin{align*}
C_1 &= \left( \{1\}, \{1\}, \emptyset, \emptyset, \dots \right)
\shortintertext{and}
C_2 &= \left(\{1, 2\}, \{1\}, \emptyset, \emptyset, \{2\}, \emptyset, \dots\right).
\end{align*}
Since we know the order of the subsets in each $C_n$ as they relate to the vertex pairs in the graph and their connectivity, we can recover the graph sequence from the ordered step
collection sequence $(C_n)$. Exchangeability in an ordered step collection sequence means that the distribution is invariant to permutations of the indices within the subsets (although
the ordering of the subsets themselves cannot be changed). Given this notion of exchangeability, the earlier connections to exchangeable partitions, feature allocations, and trait allocations remain true,
modulo the fact that they must themselves be ordered.
\citet{broderick:2013:feature} provides a paintbox characterization of ordered exchangeable feature allocations, thereby providing characterizations (via the earlier connections to partitions and feature allocations)
of edge-exchangeable graphs that add either one or multiple unique edges per step.
Note that, in these cases, this is a full characterization of edge-exchangeable graphs, by contrast to \Cref{app:step_connections}, where
we provided a characterization only of edge exchangeability in graphs.
We suspect that a similar characterization of edge-exchangeable graphs with multiple (non)unique edges per step
is available by examining characterizations of exchangeable ordered trait allocations.

\section{Proofs}
\label{app:proofs}

The proof of the main theorem in the paper
(Theorem~\ref{thm:asymp}) follows from a collection
of lemmas below.
Lemma~\ref{prop:moments} characterizes the expected number of vertices and edges;
Lemma~\ref{lem:poissmoments} establishes a useful transformation of those expectations;
and Lemma~\ref{lem:poisseqreg} shows that the two sets of expectations are
asymptotically equivalent, so it is enough to consider the transformed expectation.
Lemma~\ref{lem:mainlem} provides the asymptotics of the transformed expectations.
Finally, Lemma~\ref{lem:xvsex} shows that the random sequences converge almost surely to their
expectations, yielding the final result.

\subsection{Preliminaries}

\paragraph{Notation}
We first define the asymptotic notation used
in the main paper and appendix.
We use the notation ``a.s.'' to mean almost surely, or with probability 1.
Let $(X_n)_{n \in \Nats},(Y_n)_{n \in \Nats}$
be two random sequences.
We say that
$X_n \eqas O(Y_n)$ if
$\limsup_{n \rightarrow \infty} \frac{X_n}{Y_n} <\infty$
a.s.,
and that
$X_n \eqas \Omega(Y_n)$ if
$Y_n \eqas O(X_n)$ a.s.
We say that
$X_n \eqas o(Y_n)$ if
$\lim_{n \rightarrow \infty} \frac{X_n}{Y_n} = 0$ a.s.
Lastly,
we say that
$X_n \eqas \Theta(Y_n)$ if
$X_n \eqas O(Y_n)$ and $Y_n \eqas O(X_n)$.

Let $V_n, E_n$
be the respective sets of active vertices
and edges at step $n$ in the multigraph,
and $|V_n|, |E_n|$ be their respective cardinalities,
as defined in the main text.
We use the notation $\bar{V}_n$ and $\bar{E}_n$
to represent these analogous
vertex and edge sets for the binary graph.
Note that $\bar{V}_n$ is the same as $V_n$.

\paragraph{Useful results}
We present two useful theorems
for analyzing expectations involving
random sums of functions of points from
Poisson point processes.
Below, we will apply these theorems repeatedly
to get expectations of graph quantities.
The first theorem is Campbell's theorem,
which is used to compute the moments of functionals of a Poisson process.
We state it below for completeness, and
 refer to \citet[Sec.~3.2]{MR1207584} for details.
\begin{theorem}[Campbell's theorem]\label{thm:campbell}
    Let $\Pi$ be a Poisson point process on $S$ with rate measure $\nu$,
    and let $f:S \rightarrow \Reals$ be measurable. If
    $\int_S \min(|f(x)|,1) \,\nu(dx) <\infty$, then
    \begin{align*}
        \mbe\left(
        \exp\left({c \sum_{x \in \Pi} f(x)}\right)
        \right)
        =
        \exp\left(
        \int_S (\exp( c f(x)) - 1) \,\nu(dx)
        \right)
    \end{align*}
    for any $c \in \mathbb{C}$,
    and furthermore,
    \begin{align*}
        \mbe\left(\sum_{x \in \Pi} f(x) \right) = \int_S f(x) \,\nu(dx).
    \end{align*}
\end{theorem}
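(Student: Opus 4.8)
The plan is to follow the classical three-step route: prove the exponential identity first for nonnegative simple functions, extend it to nonnegative measurable functions by monotone approximation, and finally reach general (signed or complex-valued) $f$ under the stated integrability hypothesis, reading off the moment formula along the way.

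First I would take a nonnegative simple function $f = \sum_{k=1}^{m} a_k \mathds{1}_{A_k}$, where the $A_k$ are disjoint with $\nu(A_k) < \infty$. For such $f$ we have $\sum_{x \in \Pi} f(x) = \sum_{k=1}^m a_k N_k$, where $N_k := \#\{x \in \Pi : x \in A_k\}$; by the defining properties (a)--(b) of a Poisson point process the $N_k$ are independent with $N_k \sim \mathrm{Poisson}(\nu(A_k))$. Using the elementary identity $\mbe(e^{cZ}) = \exp(\lambda(e^c - 1))$, valid for $Z \sim \mathrm{Poisson}(\lambda)$ and all $c \in \mathbb{C}$, independence gives $\mbe\left(\exp\left(c\sum_k a_k N_k\right)\right) = \prod_{k=1}^m \exp(\nu(A_k)(e^{c a_k} - 1)) = \exp\left(\int_S (e^{c f(x)} - 1)\,\nu(\mathrm{d}x)\right)$, which is the claimed formula for simple $f$. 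Differentiating at $c=0$ (or using $\mbe \sum_k a_k N_k = \sum_k a_k \nu(A_k)$ directly) gives the moment formula in the simple case.

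Next I would let $f_n \uparrow f$ be an increasing sequence of nonnegative simple functions converging pointwise to a general nonnegative measurable $f$. For $c \le 0$ the quantities $e^{c f_n(x)} - 1$ decrease to $e^{c f(x)} - 1$ and lie in $[-1,0]$, while $\exp\left(c \sum_{x} f_n(x)\right)$ is monotone in $n$, so monotone/dominated convergence applied to both sides of the simple-function identity yields it for $f$ with $c \le 0$. Both sides are analytic in $c$ on a suitable region once finiteness is established, so this extends to all $c \in \mathbb{C}$ (alternatively the purely imaginary case follows by dominated convergence using $|e^{cf_n} - 1| \le \min(|c|\, f_n, 2)$). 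In parallel, monotone convergence applied to $\mbe \sum_{x\in\Pi} f_n(x) = \int_S f_n\,\nu(\mathrm{d}x)$ gives $\mbe \sum_{x\in\Pi} f(x) = \int_S f\,\nu(\mathrm{d}x)$ for nonnegative $f$.

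Finally, under the hypothesis $\int_S \min(|f(x)|,1)\,\nu(\mathrm{d}x) < \infty$, I would first check that $\sum_{x \in \Pi} f(x)$ converges absolutely almost surely: the set $\{x \in \Pi : |f(x)| \ge 1\}$ has cardinality $\mathrm{Poisson}(\nu(\{|f| \ge 1\}))$, hence is a.s.\ finite since $\nu(\{|f|\ge 1\}) \le \int_S \min(|f|,1)\,\nu < \infty$, and $\mbe \sum_{x \in \Pi,\, |f(x)| < 1} |f(x)| = \int_{\{|f|<1\}} |f|\,\nu(\mathrm{d}x) \le \int_S \min(|f|,1)\,\nu(\mathrm{d}x) < \infty$ by the nonnegative case, so the small-value part is a.s.\ absolutely summable. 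Writing $f = f^+ - f^-$ and combining the nonnegative case for $f^+$ and $f^-$ (using independence of the restrictions of $\Pi$ to $\{f>0\}$ and $\{f<0\}$, together with the fact that both exponents are finite so products factor legitimately) then yields both identities in general. The main obstacle I expect is not any single computation but the bookkeeping around these limiting arguments---justifying the interchange of expectation with the monotone/dominated limit when $c$ is complex, and confirming that every integral in sight is finite so that no $\infty - \infty$ arises; the Poisson moment-generating-function computation and the measure-theoretic approximation themselves are routine.
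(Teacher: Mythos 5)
The paper does not prove this result at all: Campbell's theorem is stated as background and the proof is deferred to the cited reference (Kingman, \emph{Poisson Processes}, Sec.~3.2). Your argument is essentially that standard proof---Poisson moment generating function for nonnegative simple functions, monotone approximation for nonnegative measurable $f$, then the split $f = f^+ - f^-$ with independence of the restrictions of $\Pi$ to $\{f>0\}$ and $\{f<0\}$, plus the $\{|f|\ge 1\}$/$\{|f|<1\}$ decomposition for a.s.\ absolute convergence---and it is correct. The only point worth flagging is that the identity cannot hold literally for every $c\in\mathbb{C}$ (for $f\ge 0$ and $\operatorname{Re}(c)>0$ both sides may be infinite); it holds for all $c$ for which the right-hand integral converges, e.g.\ purely imaginary $c$, which is exactly what your ``suitable region'' / analytic-continuation hedge captures and is all that the paper's applications require.
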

The second theorem is a specific form of the
Slivnyak-Mecke theorem,
which is useful for computing the expected sum of
a function of each point $x \in \Pi$ and
$\Pi \setminus \{x\}$ over all points in a Poisson point process $\Pi$.
If each point in $\Pi$ is thought of as relating to a particular vertex in a graph,
the Slivnyak-Mecke theorem allows us to take expectations of the sum (over all possible vertices in the graph) of a function
of each vertex and all its possible edges. For example, it is used below to compute the expected number of active vertices
by taking the expected sum of vertices that have nonzero degree.
We state it below for completeness, and refer to
 \citet[Prop.\ 13.1.VII]{MR2371524} and
\citet[Thm.\ 3.1,Thm.\ 3.2]{baddeley2007spatial} for details.
\begin{theorem}[Slivnyak-Mecke theorem]\label{thm:slivnyakmecke}
    Let $\Pi$ be a Poisson point process on $S$ with rate measure $\nu$,
    and let $f:S \times \Omega \rightarrow \Reals_+$ be measurable.
    Then
    \begin{align*}
        \mbe\left(\sum_{x \in \Pi} f(x, \Pi \setminus \{x\})\right)
        =
        \int_S \mbe\left( f(x, \Pi) \right) \nu(dx).
    \end{align*}
\end{theorem}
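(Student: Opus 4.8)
The statement is the Mecke equation, so the plan is to prove it directly: first for a \emph{finite} intensity measure $\nu$, and then for a general $\sigma$-finite $\nu$ by truncation. Throughout, the hypothesis $f \ge 0$ is what makes every interchange of sums, integrals, and expectations below legitimate, via Tonelli's theorem; and nonatomicity of $\nu$ is what guarantees that a Poisson process here is a.s.\ a set of distinct points, each a.s.\ differing from any fixed $x \in S$, so that ``$\Pi \setminus \{x\}$'' and ``$f(x,\Pi)$'' behave as written.

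\emph{Step 1 (finite intensity).} Suppose $\lambda := \nu(S) < \infty$. Then $\Pi$ admits the standard representation $\Pi = \{X_1,\dots,X_N\}$ with $N \sim \mathrm{Poisson}(\lambda)$ and, given $N=k$, the $X_i$ i.i.d.\ with law $\nu/\lambda$; write $\Pi_j$ for a configuration of $j$ such i.i.d.\ points. Conditioning on $N$, using exchangeability of $(X_1,\dots,X_k)$ to replace the inner sum of $k$ terms by $k\,\mbe[f(X_1,\{X_2,\dots,X_k\})]$, and disintegrating in the first coordinate, $\mbe[f(X_1,\{X_2,\dots,X_k\})] = \lambda^{-1}\int_S \mbe[f(x,\Pi_{k-1})]\,\nu(\mathrm{d}x)$, one gets
\[
\mbe\!\left(\sum_{x\in\Pi} f(x,\Pi\setminus\{x\})\right) = \sum_{k\ge 1} e^{-\lambda}\frac{\lambda^{k-1}}{(k-1)!}\int_S \mbe\!\left[f(x,\Pi_{k-1})\right]\nu(\mathrm{d}x).
\]
Swapping the sum and the integral and recognizing $\sum_{j\ge 0} e^{-\lambda}\tfrac{\lambda^j}{j!}\,\mbe[f(x,\Pi_j)] = \mbe[f(x,\Pi)]$ (the same representation of $\Pi$, now with $x$ fixed) gives the identity for finite $\nu$.

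\emph{Step 2 ($\sigma$-finite intensity).} Pick $T_m \uparrow S$ with $\nu(T_m)<\infty$, fix $m$ and set $T=T_m$. Decompose $\Pi = \Pi_T + \Pi_{T^c}$ into independent Poisson processes with intensities $\nu|_{T}$ (finite) and $\nu|_{T^c}$. Conditioning on $\Pi_{T^c}=\psi$ and applying Step 1 to $\Pi_T$ with the function $g_\psi(x,\mu):=f(x,\mu+\psi)$ (noting $(\Pi_T\setminus\{x\})+\psi = \Pi\setminus\{x\}$ for $x\in\Pi_T\subseteq T$), then taking the expectation over $\psi=\Pi_{T^c}$ and swapping, yields
\[
\mbe\!\left(\sum_{x\in\Pi,\, x\in T} f(x,\Pi\setminus\{x\})\right) = \int_T \mbe\!\left[f(x,\Pi_T+\Pi_{T^c})\right]\nu(\mathrm{d}x) = \int_T \mbe\!\left[f(x,\Pi)\right]\nu(\mathrm{d}x).
\]
Letting $m\to\infty$ and applying monotone convergence to both sides (the summands are nonnegative and the index set $\{x\in\Pi: x\in T_m\}$ increases to $\Pi$) gives the full statement.

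\emph{Expected main obstacle.} The computation in Step 1 is essentially bookkeeping with the Poisson representation; the care is in Step 2, where one must (i) use that $f(x,\Pi)$ genuinely depends on all of $\Pi$, invoking the independence and superposition of the restrictions $\Pi_T,\Pi_{T^c}$ correctly, (ii) check measurability of $\psi \mapsto \int_T \mbe[f(x,\Pi_T+\psi)]\,\nu(\mathrm{d}x)$ so the outer expectation is well defined, and (iii) confirm the a.s.\ distinctness used in Step 1 persists in the conditioned process --- all ensured by nonatomicity and $\sigma$-finiteness. An alternative route establishes the identity first for exponential integrands $f(x,\mu)=u(x)\exp(-\sum_{y\in\mu}h(y))$ directly from the Laplace functional, which is immediate from Campbell's theorem (\Cref{thm:campbell}), and then extends by a monotone class argument; the direct route above avoids that machinery.
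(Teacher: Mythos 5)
Your proposal is correct, but note that the paper itself does not prove this statement: it quotes the Slivnyak--Mecke theorem ``for completeness'' and defers the proof to the cited references (Daley and Vere-Jones, Prop.\ 13.1.VII; Baddeley, Thms.\ 3.1--3.2), so there is no in-paper argument to compare against. What you give is the standard textbook proof of the Mecke equation, and it goes through as written: Step 1 is the conditional i.i.d.\ representation of a Poisson process with finite intensity (condition on $N$, use exchangeability to reduce to one term, disintegrate in the first coordinate, and resum the Poisson mixture to recover $\mbe[f(x,\Pi)]$), and Step 2 handles $\sigma$-finite $\nu$ by the restriction/superposition theorem, applying Step 1 conditionally on $\Pi_{T^c}$ to the shifted functional $g_\psi(x,\mu)=f(x,\mu+\psi)$ and passing to the limit by monotone convergence; Tonelli justifies every interchange since $f\ge 0$. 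Two minor remarks. First, the theorem as stated in the paper does not assume $\nu$ nonatomic, and your set-theoretic use of $\Pi\setminus\{x\}$ and of $\{X_2,\dots,X_k\}$ implicitly does; this is harmless here because the paper only invokes the result for a nonatomic rate measure, but the cleaner statement (and the one in the references) is in the point-measure formalism with $f(x,\Pi-\delta_x)$, in which your argument works verbatim without that assumption. Second, your alternative route via exponential integrands and Campbell's theorem plus a monotone class argument would also suffice and would tie the proof to \Cref{thm:campbell}, which the paper does state; the direct route you chose is more self-contained.
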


\subsection{Graph moments}

In this section, we give the
expected number of vertices
and expected number of edges
for the multi- and binary graph
cases.
We begin by defining the degree $D_i$ of vertex $i$ in the
multigraph and the degree $\bar D_i$ of vertex $i$ in the
 binary graph,
respectively, as
\begin{align}
\label{eq:degree}
D_i = \sum_j M_{\{i, j\}} && \bar D_i = \sum_j \mathbbm{1}\left(M_{\{i, j\}} > 0\right).
\end{align}
Now we present the expected number of edges and vertices. We note that both
the multi- and binary graphs have the same number of (active) vertices, and so
their expectations are the same.
\begin{lemma*}[\ref{prop:moments}, main text]
The expected number of vertices and edges for the multi- and binary graphs are
\begin{align*}
\mbe\left(|\bar V_n|\right) = \mbe\left(|V_n|\right)
      &= \int
      \left[ 1 - \exp\left(-\int 1- (1-wv)^n
      \,\nu(\mathrm{d}v)\right)
      \right]
    \,\nu(\mathrm{d}w),\\
    \mbe\left(|E_n|\right) &= \frac{n}{2}\iint wv \,
    \,\nu(\mathrm{d}w) \,\nu(\mathrm{d}v),
    \\
       \mbe\left(|\bar E_n|\right) &= \frac{1}{2}\iint (1 - (1-wv)^n)
    \, \nu(\mathrm{d}w) \,\nu(\mathrm{d}v).
\end{align*}
\end{lemma*}

\begin{proof}
Using the tower property of conditional expectation and Fubini's theorem,
we have that the expected number of vertices is
\begin{align*}
\mbe \left( |V_n|  \right)
                & = \mbe\left(\mbe \left( \sum_{i} \mathbbm{1}(D_{i} > 0)
                \,\bigg|\, \mathcal{W} \right)\right)
                 = \mbe\left( \sum_{i} \mathbb{P}\left(D_{i} > 0
                \,\bigg|\, \mathcal{W} \right)\right),
\end{align*}
followed by the definition of degree in \Cref{eq:degree} and the
    binomial density,
\begin{align*}
    \hspace{-.3cm}\mbe \left( |V_n|  \right)
                    & = \mbe\left( \sum_i
                \left[1 - \prod_{j} \mathbb{P}\left(M_{\{i,j\}} = 0
                \,|\, \mathcal{W} \right) \right] \right)
                     = \mbe\left( \sum_{w \in \mathcal{W}}
        \left[1 - \hspace{-.4cm} \prod_{v \in \mathcal{W} \setminus \{w\} } \hspace{-.4cm}(1 - wv)^n
        \right] \right).
\end{align*}
Using the Slivnyak-Mecke theorem (\Cref{thm:slivnyakmecke}),
\begin{align*}
\begin{aligned}
\mbe\left(|V_n|\right) & = \int \mbe\left(1 - \prod_{v \in \mathcal{W}} (1 - w v)^n
        \right) \nu(\mathrm{d}w)\\
          &= \int \left[ 1 - \mbe\left(\exp\left(n\sum_{v \in \mathcal{W}} \log
          (1 - w v) \right)\right) \right]
        \nu(\mathrm{d}w),
\end{aligned}
\end{align*}
and finally by Campbell's theorem (\Cref{thm:campbell}) on the inner expectation,
\begin{align*}
\mbe\left(|V_n|\right)
          &= \int \left[ 1 -
          \exp\left(-\int  (1- (1-wv)^n) \,
          \nu(\mathrm{d}v)\right) \right]
          \nu(\mathrm{d}w).
\end{align*}
For the expected number of edges, we can again apply
the tower property and Fubini's theorem followed by repeated applications of
Slivnyak-Mecke to the expectations to get:
\begin{align*}
    \mbe(|E_n|) =
    \mbe\left(\mbe\left( \frac{1}{2} \sum_{i\neq j} M_{\{i,j\}} \big| \mathcal{W}
    \right)\right)
    =
    \frac{1}{2} \int \mbe\left( \sum_{v \in \mathcal{W}} nwv \right) \,\nu(dw)
    =
    \frac{n}{2}
    \int w v \nu(dw) \nu(dv).
\end{align*}
The expected number of edges for the binary case is obtained similarly via
Fubini and Slivnyak-Mecke:
\begin{align*}
    \mbe( |\bar E_n| ) &=
    \mbe\left(\frac{1}{2} \sum_{i\neq j} P(M_{\{i,j\}} > 0 | \mathcal{W} ) \right)
    =
    \frac{1}{2}
    \,\mbe\left( \sum_{w \in \mathcal{W}, v \in \mathcal{W} \setminus \{w\}}
    (1 - (1 - w v)^n )\right)
    \\
     &=
    \frac{1}{2}
    \int \int
    (1 - (1 - w v)^n )
    \,\nu(dw) \,\nu(dv).
\end{align*}
\end{proof}
The asymptotic behavior of these quantities is difficult to derive directly
due to the discreteness of the indices $n$.
Therefore, we rely on a technique
called \emph{Poissonization},
which allows us to bypass this difficulty by
instead considering a continuous
analog of the quantities in order
to get asymptotic behaviors.
Below, we introduce primed notation
$V'_t, E'_t, \bar E'_t, D'_{t,i}$
to represent the Poissonized
quantities for the vertices, multigraph edges, binary edges, and the degree of
a vertex, where the index $t$ now represents a continuous quantity.
These will be defined such that
$V'_N$ has the same asymptotic behavior as $V_N$,
$E'_N$ has the same asymptotic behavior as $E_N$,
and so on.

Given $\mathcal{W}$, let $\Pi_{ij}$ be the Poisson process generated with rate $w_i w_j$ if $i < j$
and rate 0 if $i = j$, and let $\Pi_{ji} = \Pi_{ij}$.
Let $\Pi_i := \bigcup_{j = 1}^\infty \Pi_{ij}$, which is a Poisson process
with rate $u_i := \sum_{j: j \neq i} w_i w_j$ via Poisson process
superposition \citep[Sec.~2.2]{MR1207584}.
If we think of $t$ as continuous time passing, the process $\Pi_{ij}$ represents the times at which new edges are added between vertices $i$ and $j$, and $\Pi_i$ represents the times at which any new edges involving vertex $i$ are added.

Thus, we define the Poissonized degree of vertex $i$ in the multi- and binary graph cases, respectively, to be
a function of the continuous parameter $t > 0$,
\begin{align*}
    D_{t,i}' = |\Pi_{i} \cap [0,t]|,
    \qquad
    \bar D_{t,i}' = \sum_{j} \mathbbm{1}\left(|\Pi_{ij} \cap [0,t]|>0\right).
\end{align*}
We can define the
Poissonized graph quantities of interest using these two quantities:
\begin{align*}
|\bar V'_t| = |V'_t| = \sum_{i} \mathbbm{1}(D'_{t,i} > 0),
    \qquad |E'_t| = \frac{1}{2} \sum_{i=1}^\infty D'_{t,i}, \qquad |\bar E'_t| =
    \frac{1}{2}\sum_{i}\bar D'_{t,i}.
\end{align*}

\begin{lemma}\label{lem:poissmoments}
The expected number of Poissonized vertices and edges for the multi- and binary graphs is
\begin{align*}
\mbe\left(|V'_t|\right) &=
    \int \left[ 1 - \exp\left( -\int (1-e^{-t wv})
    \,\nu(dv)\right) \right] \,\nu(dw)\\
    \mbe\left(|E'_t|\right) &=
    \frac{t}{2}  \iint w v
    \,\nu(\mathrm{d}w)\,\nu(\mathrm{d}v)
\\
    \mbe\left(|\bar E'_t|\right) &=
    \frac{1}{2}\iint
    (1-\exp(-twv))
    \,\nu(\mathrm{d}w)\,\nu(\mathrm{d}v).
\end{align*}
\end{lemma}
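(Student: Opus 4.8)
The plan is to mirror the proof of \Cref{prop:moments}, with the binomial edge multiplicities replaced by their Poissonized counterparts. Conditionally on $\mathcal{W}$, the count $|\Pi_{ij}\cap[0,t]|$ is $\mathrm{Pois}(t w_i w_j)$ rather than $\mathrm{Binom}(n,w_iw_j)$, so everywhere the earlier proof used the conditional ``edge-absent'' probability $(1-w_iw_j)^n$ we now have $e^{-tw_iw_j}$. Moreover, by Poisson superposition $\Pi_i=\bigcup_j\Pi_{ij}$ has rate $u_i=\sum_{j\neq i}w_iw_j$ (as already noted in the text), so $D'_{t,i}=|\Pi_i\cap[0,t]|$ is $\mathrm{Pois}(tu_i)$ given $\mathcal{W}$, whence $\mbe(D'_{t,i}\mid\mathcal{W})=t w_i\sum_{j\neq i}w_j$ and $\mathbb{P}(D'_{t,i}>0\mid\mathcal{W})=1-\exp(-tw_i\sum_{j\neq i}w_j)$.

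For the number of vertices, applying the tower property and Fubini exactly as in \Cref{prop:moments} gives
\[
\mbe(|V'_t|)=\mbe\Big(\sum_i\mathbb{P}(D'_{t,i}>0\mid\mathcal{W})\Big)=\mbe\bigg(\sum_{w\in\mathcal{W}}\Big[1-\exp\Big(-tw\sum_{v\in\mathcal{W}\setminus\{w\}}v\Big)\Big]\bigg).
\]
I would then apply the Slivnyak-Mecke theorem (\Cref{thm:slivnyakmecke}) to pull out the outer point $w$, which turns $\mathcal{W}\setminus\{w\}$ into a fresh $\mathrm{PPP}(\nu)$, and finally Campbell's theorem (\Cref{thm:campbell}) with $f(v)=-twv$ on the inner expectation, so that $\mbe(\exp(-tw\sum_v v))=\exp(\int(e^{-twv}-1)\,\nu(dv))$. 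This yields the stated formula for $\mbe(|V'_t|)$; the hypotheses are met since $\int\min(twv,1)\,\nu(dv)\le tw\int v\,\nu(dv)<\infty$ and the Slivnyak-Mecke integrand $1-\exp(-tw\sum_v v)$ lies in $[0,1]$.

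For the edge counts, I would pass through conditional means: $\mbe(|E'_t|\mid\mathcal{W})=\frac12\sum_i\mbe(D'_{t,i}\mid\mathcal{W})=\frac t2\sum_{i\neq j}w_iw_j$, and $\mbe(|\bar E'_t|\mid\mathcal{W})=\frac12\sum_{i\neq j}\mathbb{P}(|\Pi_{ij}\cap[0,t]|>0\mid\mathcal{W})=\frac12\sum_{i\neq j}(1-e^{-tw_iw_j})$. Writing $\sum_{i\neq j}g(w_i,w_j)=\sum_{w\in\mathcal{W}}\sum_{v\in\mathcal{W}\setminus\{w\}}g(w,v)$, one application of Slivnyak-Mecke to remove the excluded point, followed by Campbell's linear identity on the remaining sum, gives $\mbe(\sum_{i\neq j}g(w_i,w_j))=\iint g(w,v)\,\nu(dw)\nu(dv)$; the choices $g(w,v)=wv$ and $g(w,v)=1-e^{-twv}$ produce the two displayed formulas. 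Integrability is immediate, since $\iint wv\,\nu(dw)\nu(dv)=\big(\int w\,\nu(dw)\big)^2<\infty$ and $0\le 1-e^{-twv}\le twv$.

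The computation is routine given the machinery already assembled. The only steps that require care are the bookkeeping around the $i\neq j$ restriction and the excluded integration point in the repeated use of \Cref{thm:slivnyakmecke}, and the identification of $D'_{t,i}$ as $\mathrm{Pois}(tu_i)$, which is precisely where Poisson superposition is used. I anticipate no real obstacle: Poissonization is designed so that each $(1-w_iw_j)^n$ appearing in the proof of \Cref{prop:moments} is simply replaced by $e^{-tw_iw_j}$.
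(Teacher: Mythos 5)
Your proposal is correct and follows essentially the same route as the paper's proof: condition on $\mathcal{W}$, use the Poisson law of $|\Pi_{ij}\cap[0,t]|$ (and of $D'_{t,i}$ via superposition) to replace the binomial probabilities of \Cref{prop:moments} by their exponential analogues, then apply the Slivnyak--Mecke and Campbell theorems to convert the sums over $\mathcal{W}$ into integrals against $\nu$. Your explicit verification of the integrability hypotheses is a minor addition the paper leaves implicit, but the argument is the same.
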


\begin{proof}
For the expected number of Poissonized vertices, we
apply the tower property and Fubini's theorem to get
\begin{align*}
    \mbe \left( |V'_t|  \right)
    &= \mbe\left(\mbe \left( \sum_i \mathbbm{1}(D'_{t,i} > 0)
    \,\Bigg|\, \mathcal{W} \right) \right)
    = \mbe\left(  \sum_i
    1-\mathbb{P} \left( D_{t,i} = 0
    \,|\, \mathcal{W}\right) \right).
\end{align*}
    Using the fact that $D'_{t,i} | \mathcal{W}$ is Poisson-distributed,
\begin{align*}
    \mbe \left( |V'_t|  \right)
    &= \mbe\left(  \sum_i
    1 - \exp\left(-tu_i\right) \right)
    = \mbe\left(  \sum_{w \in \mathcal{W}}
    1 -
    \exp\left(-t w \sum_{v \in \mathcal{W}\setminus\{w\}} v\right)
    \right).
\end{align*}
Finally, by the Slivnyak-Mecke theorem and Campbell's theorem,
\begin{align*}
    \mbe \left( |V'_t|  \right)
    &=
    \int
    \mbe\left(1 -
    \exp\left(-t w \sum_{v \in \mathcal{W}} v\right)\right) \,\nu(dw)
    \\
    &=  \int
    \left[ 1 -
    \exp\left( \int (e^{-t wv}-1)
    \,\nu(dv)\right) \right] \,\nu(dw).
\end{align*}
For the expected number of Poissonized edges,
after applying Fubini's theorem and Slivnyak-Mecke
we have
\begin{align*}
    \mbe\left( |E'_t| \right) &=
    \mbe\left( \frac{1}{2} \sum_i D'_{t,i} \right) =
    \mbe\left( \frac{1}{2} \sum_i \mbe\left(D'_{t,i} | \mathcal{W}\right) \right)
    \\
    &=
    \mbe\left( \frac{1}{2} \sum_i u_i \right)
    =
    \mbe\left( \frac{1}{2} \sum_{w \in \mathcal{W}, v \in \mathcal{W} \setminus \{w\} } wv \right)
    \\
    &=
     \frac{1}{2}  \int\int wv    \,\nu(dw) \,\nu(dv).
\end{align*}
For the expected number of Poissonized edges in the binary case, noting that
$|\Pi_{ij} \cap [0,t]|$  is Poisson-distributed with rate $t w_i w_j$, and
    applying Fubini's theorem and Slivnyak-Mecke, we have:
\begin{align*}
    \mbe(|\bar E'_t|)
    &=
    \mbe\left( \mbe\left( \sum_i \bar D'_{t,i} | \mathcal{W}  \right)\right)
    =
    \mbe\left( \sum_{w \in \mathcal{W}, v \in \mathcal{W} \setminus \{w\}}
    (1 - \exp(-twv))\right)
    \\
    &=
    \int\int   (1 - \exp(-twv)) \,\nu(dw)\,\nu(dv).
\end{align*}
\end{proof}

\subsection{Asymptotics}

We have defined the expected number
of vertices and edges for the multigraph and
binary graph cases
(Lemma~\ref{prop:moments})
and presented the Poissonized version
of these expectations
(Lemma~\ref{lem:poissmoments}).
We now show in Lemma~\ref{lem:poisseqreg}
that the expected graph quantities and their
Poissonized expectations
are asymptotically equivalent.
\begin{lemma}\label{lem:poisseqreg}
The Poissonized expectations
for the number of vertices and the number of edges
in the multi- and binary graphs
are asymptotically equivalent to the original
expectations; i.e.,
as $n \rightarrow \infty$,
\begin{align*}
 \mbe\left(|V'_n|\right) \sim \mbe\left(|V_n|\right),
    \\
    \mbe\left(|E'_n|\right) \sim \mbe\left(|E_n|\right),
    \\
    \mbe\left(|\bar
    E'_n|\right) \sim \mbe\left(|\bar E_n|\right).
\end{align*}
\end{lemma}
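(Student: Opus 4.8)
\emph{Proof strategy.} The plan is as follows. For the edge count the claim is immediate: comparing \Cref{prop:moments} with \Cref{lem:poissmoments} shows $\mbe(|E'_n|) = \frac n2 \iint wv\,\nu(\mathrm dw)\nu(\mathrm dv) = \mbe(|E_n|)$, so the two sequences are literally equal. For the number of vertices and the number of binary edges, the strategy is to compare the binomial ``survival'' factor $(1-wv)^n = e^{-n h(wv)}$, where $h(u) := -\log(1-u)$, against the Poisson survival factor $e^{-nwv}$. Since $h(u) \ge u$ on $[0,1)$, the integrands defining $\mbe(|\bar E'_n|)$ and $\mbe(|V'_n|)$ are pointwise no larger than those defining $\mbe(|\bar E_n|)$ and $\mbe(|V_n|)$; this gives $\mbe(|\bar E'_n|) \le \mbe(|\bar E_n|)$ and $\mbe(|V'_n|) \le \mbe(|V_n|)$ for free, and it remains only to bound the gaps from above by $o(\mbe(|\bar E'_n|))$ and $o(\mbe(|V'_n|))$ respectively.

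The technical core will be a pointwise estimate on $0 \le e^{-nwv} - e^{-nh(wv)}$, obtained by splitting the unit square into $\{wv \le \tfrac12\}$ and $\{wv > \tfrac12\}$. On the first region I would use $h(u) - u = \sum_{k\ge 2} u^k/k \le 2u^2$ (valid for $u \le \tfrac12$) together with $1 - e^{-x} \le x$ to get $e^{-nwv} - e^{-nh(wv)} \le 2n(wv)^2 e^{-nwv}$, and then the elementary fact that $M := \sup_{y\ge0} y^2 e^{-y}/(1-e^{-y}) < \infty$ (the function $y^2/(e^y-1)$ is bounded on $[0,\infty)$) to conclude $e^{-nwv} - e^{-nh(wv)} \le (2M/n)(1-e^{-nwv})$ there. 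On the second region, which is contained in $\{w > \tfrac12\} \cap \{v > \tfrac12\}$, the difference is at most $e^{-nwv} \le e^{-n/2}$, and since $\int_0^1 w\,\nu(\mathrm dw) < \infty$ forces $\nu((\epsilon,1]) \le \epsilon^{-1}\int_0^1 w\,\nu(\mathrm dw) < \infty$, the total $\nu\otimes\nu$-mass of this region is at most $\nu((\tfrac12,1])^2 < \infty$.

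For the binary edges I would integrate this estimate against $\nu\otimes\nu$, yielding $\mbe(|\bar E_n|) - \mbe(|\bar E'_n|) \le (2M/n)\,\mbe(|\bar E'_n|) + \tfrac12 e^{-n/2}\,\nu((\tfrac12,1])^2$; since $\nu((0,1]) = \infty$, monotone convergence gives $\mbe(|\bar E'_n|) \to \infty$, so both terms on the right are $o(\mbe(|\bar E'_n|))$ and the equivalence follows. For the vertices, write $\mbe(|V_n|) = \int(1-e^{-\phi_n(w)})\,\nu(\mathrm dw)$ and $\mbe(|V'_n|) = \int(1-e^{-\phi'_n(w)})\,\nu(\mathrm dw)$ with $\phi_n(w) := \int(1-e^{-nh(wv)})\,\nu(\mathrm dv) \ge \phi'_n(w) := \int(1-e^{-nwv})\,\nu(\mathrm dv)$. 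Then $0 \le e^{-\phi'_n(w)} - e^{-\phi_n(w)} \le e^{-\phi'_n(w)}(\phi_n(w) - \phi'_n(w))$, and the inner estimate above, applied in the $v$ variable for fixed $w$, gives $\phi_n(w) - \phi'_n(w) \le (2M/n)\phi'_n(w) + e^{-n/2}\nu((\tfrac12,1])$ when $w > \tfrac12$ and $\le (2M/n)\phi'_n(w)$ otherwise. Combining this with the bound $x e^{-x} \le 1 - e^{-x}$ (again because $x/(e^x-1) \le 1$) and integrating over $\nu$ gives $\mbe(|V_n|) - \mbe(|V'_n|) \le (2M/n)\,\mbe(|V'_n|) + e^{-n/2}\nu((\tfrac12,1])^2$, which is $o(\mbe(|V'_n|))$ since $\mbe(|V'_n|) \to \nu((0,1]) = \infty$ by monotone convergence.

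I expect the main obstacle to be ensuring the error terms are genuinely $o(\cdot)$ and not merely $O(\cdot)$ of the Poissonized expectations: this forces one to exhibit the divergences $\mbe(|V'_n|), \mbe(|\bar E'_n|) \to \infty$, which trace back to the standing hypothesis $\nu([0,1]) = \infty$, and it is precisely the uniform elementary inequalities $y^2 e^{-y} \le M(1-e^{-y})$ and $x e^{-x} \le 1-e^{-x}$ that convert the pointwise closeness of $h(u)$ to $u$ near the origin into a uniform relative error of order $1/n$. A secondary care point is that the neighbourhood of the origin (where $\nu$ is infinite) and the region where $wv$ is near $1$ must be treated separately, as above, since no single global comparison constant between the two integrands works.
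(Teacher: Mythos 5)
Your proposal is correct, and its engine is the same elementary binomial-versus-Poisson comparison the paper uses, but the finishing mechanism is genuinely different. The paper bounds the gap pointwise by $n(wv)^2e^{-nwv}$ (an inequality it invokes on all of $[0,1]$, so no splitting at $wv=\tfrac12$ is needed), dominates this by the integrable function $e^{-1}wv$ using $nwv\,e^{-nwv}\le e^{-1}$ together with $\int w\,\nu(\mathrm{d}w)<\infty$, and concludes by dominated convergence that the \emph{absolute} differences $\mbe(|V_n|)-\mbe(|V'_n|)$ and $\mbe(|\bar E_n|)-\mbe(|\bar E'_n|)$ tend to zero; asymptotic equivalence then follows because the sequences are monotonically increasing (hence bounded away from zero). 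You instead convert essentially the same pointwise estimate into a uniform \emph{relative} error, $\mbe(|\bar E_n|)-\mbe(|\bar E'_n|)\le (2M/n)\,\mbe(|\bar E'_n|)+O(e^{-n/2})$ and analogously for the vertices, via the bounds $y^2e^{-y}\le M(1-e^{-y})$ and $xe^{-x}\le 1-e^{-x}$, and you finish using $\mbe(|V'_n|),\mbe(|\bar E'_n|)\to\infty$, which you correctly trace to $\nu([0,1])=\infty$ by monotone convergence. Your route costs the extra case analysis on $\{wv>\tfrac12\}$ (handled correctly via $\nu((\tfrac12,1])<\infty$, which follows from $\int w\,\nu(\mathrm{d}w)<\infty$), but it buys an explicit $O(1/n)$ relative-error rate and avoids dominated convergence; the paper's route is shorter and yields the stronger conclusion that the differences themselves vanish, not merely that the ratios tend to one. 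The multigraph-edge case is treated identically in both arguments (the two expectations coincide exactly).
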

\begin{proof}
For the vertices, we have
\begin{align*}
\mbe\left(|V_n|-|V'_n|\right) &=
    \!\int \!\!
    \left[
    \exp\left(-\!\textstyle\int (1-e^{-nwv})
    \,\nu(\mathrm{d}v)\right)
    - \exp\left( -\!\textstyle\int (1-(1-wv)^n)
    \,\nu(\mathrm{d}v)\right)
    \right]
    \,\nu(\mathrm{d}w) .
\end{align*}
Using the elementary inequalities
\begin{align*}
0 &\leq e^{-nx} - (1-x)^n \leq nx^2e^{-nx}, \qquad x\in[0, 1], \, \, n>0\\
0 &\leq e^{-a} - e^{-b} \leq b-a, \qquad 0\leq a \leq b,
\end{align*}
we have
\begin{align}
0\leq \mbe\left(|V_n|-|V'_n|\right) &\leq \iint n(wv)^2 e^{-nwv}
    \,\nu(\mathrm{d}v)\,\nu(\mathrm{d}w).\label{eq:vertexpdiff}
\end{align}
Finally, note that
\begin{align*}
 \forall n> 0, \forall w, v \in [0, 1], \quad n w v e^{-n wv} \leq e^{-1}
\end{align*}
and
\begin{align*}
 \iint e^{-1} wv
    \,\nu(\mathrm{d}w)
    \,\nu(\mathrm{d}v) =
    e^{-1}\left(\int w
    \,\nu(\mathrm{d}w)\right)^2 < \infty.
\end{align*}
Therefore by Lebesgue dominated convergence,
\begin{align*}
0 \leq \lim_{n\to\infty} \mbe\left(|V_n|-|V'_n|\right) &\leq  \iint
    \lim_{n\to\infty} n(wv)^2 e^{-nwv}
    \,\nu(\mathrm{d}v)\,\nu(\mathrm{d}w) = 0,
\end{align*}
so we have that
    $\lim_{n\to\infty} \mbe\left(|V_n|-|V'_n|\right) = 0$.
    Since $\mbe(|V_n|)$, $\mbe(|V'_n|)$ are monotonically increasing by inspection,
    $\mbe(|V_n|)\sim\mbe(|V'_n|)$, $n\to\infty$,
as required.

For the binary graph edges,
\begin{align*}
\mbe\left(|\bar E_n|-|\bar E'_n|\right) &= \frac{1}{2}
    \iint
    (\exp(-nwv) - (1-wv)^n)
    \,\nu(\mathrm{d}v) \,\nu(\mathrm{d}w).
\end{align*}
Using the earlier elementary inequalities,
\begin{align*}
0 \leq \mbe\left(|\bar E_n|-|\bar E'_n|\right)
    &= \frac{1}{2}
    \iint n(wv)^2e^{-nwv}
    \,\nu(\mathrm{d}v)\,\nu(\mathrm{d}w).
\end{align*}
This is (modulo the constant factor of $\nicefrac{1}{2}$) the exact expression in \Cref{eq:vertexpdiff}. Therefore, the same analysis
can be performed, and the result holds.

For multigraph edges,
\begin{align*}
\mbe\left(|E_n|-|E'_n|\right) &= \frac{n}{2}
    \iint
    (wv - wv)
    \,\nu(\mathrm{d}v) \,\nu(\mathrm{d}w) = 0,
\end{align*}
so $\mbe\left(|E_n|\right) \sim \mbe\left(|E'_n|\right)$, $n\to\infty$.
\end{proof}

Lemma~\ref{lem:poisseqreg} allows us to analyze the asymptotics
of the Poissonized expectations and apply the result directly
to the asymptotics of the original graph quantities.
To achieve the desired asymptotics for the Poissonized expectations, we will make a further
assumption on the rate measure $\nu$
generating the vertex weights in \Cref{eq:graphmodel}.
Namely, we assume that the tails of $\nu$ decay at a rate that will yield the
appropriate weight decay in the weights $(w_j)$---and
thereby the appropriate decay in vertex creation to finally yield sparsity in the graph itself.
In particular,
the tail of a measure $\nu$ is said to be \emph{regularly varying} if
there exists a function $\ell : \mathbb{R}_+ \to \mathbb{R}_+$ and $\alpha \in (0, 1)$ such that
\begin{align}
\int_x^1\nu(\mathrm{d} w) &\sim x^{-\alpha}\ell(x^{-1}), \quad x\to 0,
&&
    \forall \, c >0, \,\,
    \lim_{x \to \infty} \frac{\ell(cx)}{\ell(x)} = 1. \label{eq:regvar}
\end{align}
The condition on the function $\ell$ is equivalent to saying that $\ell$ is \emph{slowly varying}.
For additional details on regular and slow variation,
see \citet[VIII.8]{MR0270403}. An important equivalent formulation of \Cref{eq:regvar} that we will use
in our following proof of the asymptotics
of Poissonized expectations is provided by Lemma~\ref{gnedin13}
(see \citet[Prop.~13]{MR2318403} and \citet[Prop.~6.1]{MR2934958}
for the proof).
\begin{lemma}[\citet{MR2318403,MR2934958}]
    \label{gnedin13}
    The tail of $\nu$ is regularly varying
    iff there exists a function $\ell : \mathbb{R}_+ \to \mathbb{R}_+$ and $\alpha \in (0, 1)$ such that
    \begin{align}
        \int_0^x u \nu(du) \sim x^{1-\alpha}
        \ell(x^{-1}), \quad x \to 0, && \forall \, c >0, \,\,
    \lim_{x \to \infty} \frac{\ell(cx)}{\ell(x)} = 1.
    \end{align}
\end{lemma}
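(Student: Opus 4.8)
The plan is to obtain both implications from Karamata's theorem for regularly varying functions (see \citet[VIII.9]{MR0270403}), combined with integration by parts against the tail and the truncated mean. Write $\bar\nu(x) := \int_x^1 \nu(\mathrm{d}w)$ for the tail and $m(x) := \int_0^x w\,\nu(\mathrm{d}w)$ for the truncated first moment, and recall $m(1) = \int_0^1 w\,\nu(\mathrm{d}w) < \infty$ by the standing assumption on $\nu$. On $(0,1]$ we have the two Stieltjes identities $\nu(\mathrm{d}w) = -\,\mathrm{d}\bar\nu(w)$ and $w\,\nu(\mathrm{d}w) = \mathrm{d}m(w)$, i.e.\ $\nu(\mathrm{d}w) = w^{-1}\mathrm{d}m(w)$, which let me pass freely between the two quantities.

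For the direction ``$\bar\nu$ regularly varying $\Rightarrow$ $m$ of the stated form'', suppose $\bar\nu(x) \sim x^{-\alpha}\ell(x^{-1})$ as $x\to 0$ with $\alpha\in(0,1)$ and $\ell$ slowly varying. Integration by parts gives
\begin{align*}
m(x) = \int_0^x w\,(-\mathrm{d}\bar\nu(w)) = -\,x\bar\nu(x) + \lim_{w\to 0} w\bar\nu(w) + \int_0^x \bar\nu(w)\,\mathrm{d}w .
\end{align*}
Since $\alpha<1$, $w\bar\nu(w)\sim w^{1-\alpha}\ell(w^{-1})\to 0$, so the boundary term at $0$ vanishes. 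The integrand $\bar\nu$ is regularly varying at $0$ of index $-\alpha>-1$, so after the substitution $w\mapsto 1/s$ Karamata's theorem yields $\int_0^x\bar\nu(w)\,\mathrm{d}w \sim (1-\alpha)^{-1}x^{1-\alpha}\ell(x^{-1})$. Combining, $m(x)\sim \frac{\alpha}{1-\alpha}\,x^{1-\alpha}\ell(x^{-1})$, which has the claimed form once the positive constant $\frac{\alpha}{1-\alpha}$ is absorbed into the slowly varying function (a constant multiple of a slowly varying function is slowly varying, and the existential quantifier on $\ell$ in the statement permits this rescaling).

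For the converse, suppose $m(x)\sim x^{1-\alpha}\ell(x^{-1})$ as $x\to 0$ with $\alpha\in(0,1)$. Using $\nu(\mathrm{d}w)=w^{-1}\mathrm{d}m(w)$ and integrating by parts,
\begin{align*}
\bar\nu(x) = \int_x^1 w^{-1}\,\mathrm{d}m(w) = m(1) - x^{-1}m(x) + \int_x^1 w^{-2}m(w)\,\mathrm{d}w .
\end{align*}
Here $x^{-1}m(x)\sim x^{-\alpha}\ell(x^{-1})$, while the integrand $w^{-2}m(w)$ is regularly varying at $0$ of index $-1-\alpha<-1$, so Karamata (again via $w\mapsto 1/s$) gives $\int_x^1 w^{-2}m(w)\,\mathrm{d}w \sim \alpha^{-1}x^{-\alpha}\ell(x^{-1})$, which grows and hence dominates the bounded term $m(1)$. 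Therefore $\bar\nu(x)\sim\big(\tfrac1\alpha - 1\big)x^{-\alpha}\ell(x^{-1}) = \frac{1-\alpha}{\alpha}\,x^{-\alpha}\ell(x^{-1})$, again of the required form up to the positive constant.

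The routine parts are the two integrations by parts and the bookkeeping of the multiplicative constants; the substantive input is Karamata's theorem, and the point that needs care is the interchange of limit and integral when evaluating $\int_0^x\bar\nu$ and $\int_x^1 w^{-2}m(w)\,\mathrm{d}w$ near $0$. This is handled by the uniform convergence (Potter) bounds for regularly varying functions, and it is precisely there, together with the vanishing of the boundary terms at $0$, that the hypotheses $\alpha\in(0,1)$ and $\int_0^1 w\,\nu(\mathrm{d}w)<\infty$ enter. Alternatively, since this is a known equivalence for the tails and truncated means of Lévy measures, one may instead simply invoke \citet[Prop.~13]{MR2318403} or \citet[Prop.~6.1]{MR2934958}.
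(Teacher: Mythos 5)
Your argument is correct, but it takes a different route from the paper in the trivial sense that the paper offers no proof at all: Lemma~\ref{gnedin13} is stated as a known result, with the reader pointed to \citet[Prop.~13]{MR2318403} and \citet[Prop.~6.1]{MR2934958} for its proof, and the surrounding text only uses it (e.g., to verify regular variation for the three-parameter beta process). What you supply is a self-contained derivation of that equivalence: writing $\bar\nu(x)=\int_x^1\nu(\mathrm{d}w)$ and $m(x)=\int_0^x w\,\nu(\mathrm{d}w)$, you pass between the two by Stieltjes integration by parts and apply the direct half of Karamata's theorem on each side, checking that the boundary terms vanish (using $\alpha<1$ and slow variation of $\ell$) and that the constants $\tfrac{\alpha}{1-\alpha}$ and $\tfrac{1-\alpha}{\alpha}$ can be absorbed into the existentially quantified slowly varying function; the constants and indices all check out against the model case $\nu(\mathrm{d}w)\propto w^{-1-\alpha}\mathrm{d}w$, and the cancellation step is legitimate because the two competing terms have distinct asymptotic constants. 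What your approach buys is independence from the cited propositions and an explicit view of where $\alpha\in(0,1)$ and $\int_0^1 w\,\nu(\mathrm{d}w)<\infty$ enter; what the paper's citation buys is brevity and a pointer to statements proved in the Lévy-measure setting the model actually uses. Your closing remark that one may simply invoke the references is exactly the paper's choice.
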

Lemma~\ref{gnedin13} is often easier to use than \Cref{eq:regvar} when checking whether a particular measure $\nu$ has a
regularly varying tail.
For example, for the three-parameter beta process, we have
\begin{align*}
    \int_0^x  u \nu(du)
    &= \gamma \frac{\Gamma(1+\beta)}{\Gamma(1-\alpha)\Gamma(\beta+\alpha)}
    \int_0^x u^{-\alpha} (1-u)^{\beta + \alpha - 1} du \\
    &\sim \gamma \frac{\Gamma(1+\beta)}{\Gamma(1-\alpha)\Gamma(\beta+\alpha)}
    \int_0^x u^{-\alpha}  du, \quad x \downarrow 0
    \\
    &= \gamma \frac{\Gamma(1+\beta)}{\Gamma(1-\alpha)\Gamma(\beta+\alpha)}
    \frac{1}{1-\alpha} x^{1-\alpha},
\end{align*}
so the tail of $\nu$ is regularly varying when the discount parameter $\alpha$ satisfies $\alpha\in(0,1)$ with
$\ell(x^{-1})$ equal to the constant function
\begin{align}
    \label{eq-3bp-const}
    \ell(x^{-1}) = \frac{\gamma}{1-\alpha}
    \frac{\Gamma(1+\beta)}{\Gamma(1-\alpha)\Gamma(\beta+\alpha)}.
\end{align}
Note that the two-parameter beta process does not exhibit this behavior (since in this case, $\alpha = 0$).

Given the two formulations of a measure $\nu$ with a regularly varying tail above,
we are ready to characterize the asymptotics of the earlier Poissonized expectations.
\begin{lemma}\label{lem:mainlem}
If the tail of $\nu$ is regularly varying as per \Cref{eq:regvar}, then as $n\to\infty$,
\begin{align*}
\mbe\left(|V'_n|\right) =\Theta(n^\alpha\ell(n)), && \mbe\left(|E'_n|\right) = \Theta(n), && \mbe\left(|\bar E'_n|\right) = O\left(\ell(\sqrt{n})\min\left(n^{\frac{1+\alpha}{2}},\ell(n)n^{\frac{3\alpha}{2}}\right)\right).
\end{align*}
\end{lemma}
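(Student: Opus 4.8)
The plan is to reduce all three estimates to the large-argument asymptotics of the one-dimensional functional $\Psi(u) := \int_0^1 \bigl(1-e^{-uv}\bigr)\,\nu(\mathrm{d}v)$, $u\ge 0$. By \Cref{lem:poissmoments} the multigraph edge count is immediate: $\mbe(|E'_n|) = \tfrac n2\bigl(\int_0^1 w\,\nu(\mathrm{d}w)\bigr)^2 = \Theta(n)$, since $0 < \int_0^1 w\,\nu(\mathrm{d}w) < \infty$ by hypothesis. For the other two quantities, \Cref{lem:poissmoments} gives $\mbe(|V'_n|) = \int_0^1 \bigl(1 - e^{-\Psi(nw)}\bigr)\,\nu(\mathrm{d}w)$ and $\mbe(|\bar E'_n|) = \tfrac12\int_0^1 \Psi(nw)\,\nu(\mathrm{d}w)$. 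I would first record three properties of $\Psi$: it is nondecreasing and tends to $\nu((0,1])=\infty$; it satisfies the elementary linear bound $\Psi(u) \le u\int_0^1 v\,\nu(\mathrm{d}v)$; and $\Psi(u) = \Theta(u^\alpha\ell(u))$ as $u\to\infty$. The last of these needs no Tauberian machinery: the lower bound follows from $\Psi(u) \ge (1-e^{-1})\int_{1/u}^1 \nu(\mathrm{d}v)$ together with the regular-variation hypothesis on the tail of $\nu$, and the upper bound from splitting the $v$-integral at $1/u$, bounding $1-e^{-uv}$ by $1$ and by $uv$ on the two pieces, and invoking the tail hypothesis together with \Cref{gnedin13}.

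For $\mbe(|V'_n|)$, fix a constant $K$ with $\Psi(K)\ge 1$ and split the outer integral at $w = K/n$. On $\{w\ge K/n\}$ monotonicity gives $\Psi(nw)\ge\Psi(K)\ge 1$, hence $1-e^{-\Psi(nw)}\ge 1-e^{-1}$ and $\mbe(|V'_n|)\ge (1-e^{-1})\int_{K/n}^1\nu(\mathrm{d}w) = \Theta(n^\alpha\ell(n))$, using $\int_{K/n}^1\nu(\mathrm{d}w)\sim (K/n)^{-\alpha}\ell(n/K)$ and slow variation of $\ell$. For the matching upper bound, on $\{w\ge 1/n\}$ use $1-e^{-\Psi(nw)}\le 1$ to bound the contribution by $\int_{1/n}^1\nu(\mathrm{d}w) = O(n^\alpha\ell(n))$, and on $\{w<1/n\}$ use $1-e^{-\Psi(nw)}\le \Psi(nw)\le nw\int_0^1 v\,\nu(\mathrm{d}v)$ to bound the contribution by a constant multiple of $n\int_0^{1/n}w\,\nu(\mathrm{d}w) = O(n^\alpha\ell(n))$ via \Cref{gnedin13}. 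Hence $\mbe(|V'_n|) = \Theta(n^\alpha\ell(n))$.

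The bound on $\mbe(|\bar E'_n|)$ is the crux; I would prove the two estimates inside the minimum separately, in each case splitting $\int_0^1\Psi(nw)\,\nu(\mathrm{d}w)$ at $w = 1/\sqrt n$. For the $\ell(\sqrt n)\ell(n)\,n^{3\alpha/2}$ estimate: on $\{w\ge 1/\sqrt n\}$ use $\Psi(nw)\le\Psi(n) = O(n^\alpha\ell(n))$ and $\int_{1/\sqrt n}^1\nu(\mathrm{d}w) = O(n^{\alpha/2}\ell(\sqrt n))$; on $\{w<1/\sqrt n\}$, further split at $w = 1/n$, handling $\{w<1/n\}$ by the linear bound on $\Psi$ and \Cref{gnedin13} (which contributes only $O(n^\alpha\ell(n))$), and on $\{1/n\le w<1/\sqrt n\}$, where $nw\in[1,\sqrt n]$, use $\Psi(nw) = O\bigl((nw)^\alpha\ell(nw)\bigr)$, control $\ell(nw)$ by a Potter bound of the form $\ell(nw)\le C_\delta\,\ell(\sqrt n)\,(\sqrt n/(nw))^\delta$, and apply Karamata's theorem to $\int_{1/n}^1 w^{\alpha-\delta}\,\nu(\mathrm{d}w)$; the auxiliary exponent $\delta$ cancels in the total power of $n$, leaving $O(\ell(\sqrt n)\ell(n)\,n^{3\alpha/2})$. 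For the $\ell(\sqrt n)\,n^{(1+\alpha)/2}$ estimate: on $\{w<1/\sqrt n\}$ the linear bound $\Psi(nw)\le nw\int_0^1 v\,\nu(\mathrm{d}v)$ with \Cref{gnedin13} gives exactly $O(n^{(1+\alpha)/2}\ell(\sqrt n))$; on $\{w\ge 1/\sqrt n\}$ use $\Psi(nw) = O\bigl((nw)^\alpha\ell(nw)\bigr)$, a Potter bound $\ell(nw)\le C_\delta\,\ell(n)\,w^{-\delta}$, and Karamata's theorem for $\int_{1/\sqrt n}^1 w^{\alpha-\delta}\,\nu(\mathrm{d}w)$, which yields $O(n^{\alpha+\delta/2}\ell(n)\ell(\sqrt n)) = o(n^{(1+\alpha)/2}\ell(\sqrt n))$ as soon as $\delta < 1-\alpha$ (possible since $\alpha<1$). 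Taking the smaller of the two estimates gives the claimed bound on $\mbe(|\bar E'_n|)$.

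The main obstacle I anticipate is the uniform handling of the slowly varying factor $\ell$ inside the integrals: choosing the Potter exponent $\delta$ small enough that its contribution cancels in the final power of $n$, extending the Potter bound down to the region where the argument $nw$ is merely bounded below rather than tending to infinity, and invoking Karamata's theorem in integral form for the moments $\int_x^1 w^\rho\,\nu(\mathrm{d}w)$ with $\rho < \alpha$ rather than \Cref{gnedin13} alone. Everything else is elementary estimation of $1-e^{-x}$ and bookkeeping around the splitting points $K/n$, $1/n$, and $1/\sqrt n$.
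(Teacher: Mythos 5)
Your argument is correct in substance but follows a genuinely different route from the paper's. The paper works directly with the double integrals from Lemma~\ref{lem:poissmoments}: for $\mbe(|\bar E'_n|)$ it linearizes $1-e^{-nwv}$ at $v=n^{-1/2}$ (respectively bounds $v$ by $1$ or $n^{-1/2}$ on the various pieces), integrates by parts to rewrite $\int(1-e^{-uw})\,\nu(\mathrm{d}w)$ as a Laplace transform of the tail $\bar\nu$, and invokes an Abelian theorem \citep[Sec.~XIII.5, Thm.~4]{MR0270403} to extract $u^\alpha\ell(u)$ asymptotics. You instead isolate the one-dimensional functional $\Psi(u)=\int(1-e^{-uv})\,\nu(\mathrm{d}v)$, prove $\Psi(u)=\Theta(u^\alpha\ell(u))$ by elementary splitting at $v=1/u$ (so no Abelian/Tauberian input is needed there), and then control the outer integrals $\int(1-e^{-\Psi(nw)})\,\nu(\mathrm{d}w)$ and $\int\Psi(nw)\,\nu(\mathrm{d}w)$ via monotonicity of $\Psi$, Potter bounds, and Karamata's theorem for truncated moments $\int_x^1 w^{\alpha-\delta}\,\nu(\mathrm{d}w)$; both proofs share the splitting points $1/n$ and $1/\sqrt n$ and the use of Lemma~\ref{gnedin13} for the small-weight regions. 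What your route buys is a cleaner modular structure (all regular-variation input concentrated in the single estimate on $\Psi$) and avoidance of the Laplace-transform Abelian theorem, at the price of the Potter/Karamata bookkeeping you already flag: you must handle the range where $nw$ is merely bounded (there $\Psi(nw)$ is simply bounded by a constant and the $w$-measure of that range is $O(n^\alpha\ell(n))$, which is dominated), and the exponent $\delta$ does not literally cancel in the $n^{3\alpha/2}$ estimate --- the middle-region contribution is $O(n^{\alpha+\delta/2}\ell(\sqrt n)\ell(n))$, which is absorbed only after choosing $\delta\le\alpha$ (just as $\delta<1-\alpha$ is needed for the $n^{(1+\alpha)/2}$ estimate); with those choices made explicit, the bound inside the minimum follows as you state, and the $\mbe(|V'_n|)$ and $\mbe(|E'_n|)$ claims are handled essentially as in the paper.
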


\begin{proof}
Throughout this proof we use $c$ to denote a constant; the precise value of $c$ changes but is immaterial.
We also define the tail of $\nu$ as $\bar\nu(x) := \int_x^1\nu(\mathrm{d}w)$, for notational brevity.
Furthermore, recall that we assume the rate measure
$\nu$ satisfies $\int w \nu(dw) <\infty$.

We first examine the expected number of Poissonized vertices,
\begin{align*}
    \mbe\left(|V'_n|\right) &= \int \left[1 - \exp\left( -\int (1-e^{-n wv})
    \nu(dv)\right)\right] \,\nu(dw),
\end{align*}
by splitting the integral into two parts.
    First, by the assumption that the tail of $\nu$ is regularly varying,
\begin{align}
    \label{pois-verts-upper1}
    \int_{n^{-1}}^1 \left[1 - \exp\left( -\int (1-e^{-n wv}) \nu(dv)\right)\right] \,\nu(dw) & \leq \int_{n^{-1}}^1 \nu(dw) \sim c n^{\alpha} \ell(n).
\end{align}
Next, we upper bound the integral term
\begin{align}
    \int_{0}^{n^{-1}} \left[1 - \exp\left( -\int (1-e^{-n wv})
    \nu(dv)\right)\right] \,\nu(dw) & \leq \int_{0}^{n^{-1}}\int (1-e^{-nwv})
    \nu(\mathrm{d}v)\nu(\mathrm{d}w) \notag\\
&\leq \int_{0}^{n^{-1}}\int nwv\nu(\mathrm{d}v)\nu(\mathrm{d}w) \notag \\
    &\leq \left(\int v\nu(\mathrm{d}v)\right) n
    \int_{0}^{n^{-1}}w\nu(\mathrm{d}w) \notag\\
&\sim c n^\alpha \ell(n),
\label{pois-verts-upper2}
\end{align}
where the asymptotic behavior in the last line follows from
    Lemma~\ref{gnedin13}.
Thus,
    combining the upper  bounds on \Cref{pois-verts-upper1}
    and \Cref{pois-verts-upper2} gives the bound for the entire integral:
$\mbe\left(|V'_n|\right) = O(n^\alpha \ell(n))$.

Now we bound the integral below:
\begin{align*}
    \lefteqn{ \int_{n^{-1}}^1
    \left[ 1 - \exp\left( -\int (1-e^{-n wv}) \nu(dv)\right)\right] \,\nu(dw) } \\
    & \geq \int_{n^{-1}}^1
    \left[ 1 - \exp\left(-\int (1-e^{- v} ) \nu(dv)\right) \right] \,\nu(dw) \\
    &=\left(\int_{n^{-1}}^1\nu(\mathrm{d}w)\right)\left(1 - \exp\left( -\int (1-e^{- v})\nu(dv)\right)\right)\\
&\sim c n^\alpha \ell(n),
\end{align*}
where the last line follows from the assumption that the tail of $\nu$ is regularly varying.
The second piece of the integral on $[0, n^{-1}]$ is bounded below by 0,
and in combination, we have that $n^\alpha \ell(n) = O\left(\mbe\left(|V'_n|\right)\right)$.
Now combining this with the previous upper bound result, we have
$\mbe\left(|V'_n|\right) = \Theta(n^\alpha \ell(n))$.

The expected number of Poissonized multigraph edges satisfies $\mbe\left(E'_n\right) =\Theta(n)$,
    since
\begin{align*}
    \mbe(|E_n'|) = \frac{n}{2} \iint wv \nu(dw)\nu(dv)
    =
    \frac{n}{2}
    \int w\nu(dw)
    \int v\nu(dv)
    =
    \frac{c^2 }{2} n.
\end{align*}

For the Poissonized binary graph edges, we split the integral into two pieces.
We first upper bound the integral on the interval $[0, n^{-\nicefrac{1}{2}}]$ and apply \Cref{gnedin13} to get the following asymptotic
behavior:
\begin{align*}
\frac{1}{2}\int_{0}^{n^{-1/2}}\!\!\!\!\!
    \int (1-\exp\left(-nwv\right))
    \,\nu(\mathrm{d}w) \,\nu(\mathrm{d}v)
    &\leq \frac{1}{2}\int_{0}^{n^{-1/2}}
    \!\!\!\!\!
    \int nwv
    \,\nu(\mathrm{d}w) \,\nu(\mathrm{d}v) \\
&= \frac{n}{2}\left(\int w\nu(\mathrm{d}w)\right)\int_{0}^{n^{-1/2}}v \nu(\mathrm{d}v) \\
    &\sim c n (n^{-1/2})^{1-\alpha}\ell(n^{1/2})
\\
    &= cn^{\frac{1+\alpha}{2}}\ell(n^{1/2}).
\end{align*}
We then bound the second portion on the interval $[n^{-\nicefrac{1}{2}}, 1]$ by
linearizing at $v = n^{-1/2}$.
Using integration by parts and an
    Abelian theorem  \cite[Sec.~XIII.5, Thm.~4]{MR0270403}
    for the Laplace transform, for some constants $b, d > 0$, we have
\begin{align*}
\lefteqn{ \frac{1}{2}
    \int_{n^{-1/2}}^{1} \!
    \int (1-\exp\left(-nwv\right))
    \,\nu(\mathrm{d}w) \,\nu(\mathrm{d}v) } \\
&\leq \frac{1}{2}\int_{n^{-1/2}}^{1} \!
    \int \left(1-\exp\left(-{n}^{1/2} w\right)
    + nw\exp\left(-{n}^{1/2} w\right)
    (v-{n}^{-1/2})\right)
    \,\nu(\mathrm{d}w) \,\nu(\mathrm{d}v) \\
&= \frac{1}{2}
    \left(\int_{n^{-1/2}}^1\hspace{-.5cm} \nu(\mathrm{d}v)\right)
    \!\!\int\!\! {n}^{1/2} \exp(-n^{1/2} w)
    \,\bar\nu(w)\,\mathrm{d}w
    \\
    &\qquad +\frac{1}{2}\int_{n^{-1/2}}^1 \hspace{-.5cm} (nv-{n}^{1/2})\,\nu(\mathrm{d}v)
    \!\int\! w\exp(-{n}^{1/2}w)
    \,\nu(\mathrm{d}w) \\
    &\sim b n^{\alpha}\ell^2({n}^{1/2})
+\frac{1}{2}
    \int_{0}^1v
    \,\nu(\mathrm{d}v) \,{n}^{1/2} \!\int
    {n}^{1/2}
    \left(\exp(-{n}^{1/2} w)
    - n^{1/2}
    w\exp\left(-{n}^{1/2} w\right)\right)
    \,\bar\nu(w)\,\mathrm{d}w \\
    &\leq b n^{\alpha}\ell^2({n}^{1/2})
    +\frac{1}{2}\int_{0}^1v
    \,\nu(\mathrm{d}v) \,\,{n}^{1/2}
    \!\int {n}^{1/2}
    \exp(-{n}^{1/2} w) \,\bar\nu(w)\,\mathrm{d}w \\
    &\sim b n^{\alpha}\ell^2({n}^{1/2})
+d {n}^{1/2} {n}^{\alpha/2}
    \ell({n}^{1/2})\\
    &= O(n^{\frac{1+\alpha}{2}}\ell({n}^{1/2})).
\end{align*}
Therefore we have that
$\mbe\left(|\bar E'_n|\right) = O(n^{\frac{1+\alpha}{2}}\ell({n}^{1/2}))$.

To get the other bound, we split the integral into three pieces. First,
\begin{align*}
\lefteqn{ \frac{1}{2}\int_{0}^{n^{-1}}
    \int (1-\exp\left(-nwv\right))
    \,\nu(\mathrm{d}w)\,\nu(\mathrm{d}v) } \\
&\leq \frac{1}{2}\int_{0}^{n^{-1}}
    \int nwv
    \,\nu(\mathrm{d}w)\,\nu(\mathrm{d}v) \\
&= \frac{n}{2}\left(\int w
    \,\nu(\mathrm{d}w)\right)
    \int_{0}^{n^{-1}}v \,\nu(\mathrm{d}v) \\
&\sim c n (n^{-1})^{1-\alpha}\ell(n) = cn^{\alpha}\ell(n).
\end{align*}
Next, integration by parts yields
\begin{align*}
\lefteqn{ \frac{1}{2}
    \int_{n^{-1/2}}^{1}
    \int (1-\exp\left(-nwv\right))
    \,\nu(\mathrm{d}w)\,\nu(\mathrm{d}v) } \\
    &\leq \frac{1}{2}\int_{n^{-1/2}}^{1}\int (1-\exp(-nw))
    \,\nu(\mathrm{d}w) \,\nu(\mathrm{d}v) \\
&= \frac{1}{2}\left(\int_{n^{-1/2}}^1
    \,\nu(\mathrm{d}v)\right)\int n\exp(-nw)
    \,\bar\nu(w)\,\mathrm{d}w \\
    &\sim c \left(n^{-1/2}\right)^{-\alpha}
    \ell({n}^{1/2}) n^{\alpha}\ell(n)\\
    &= cn^{\frac{3\alpha}{2}}\ell(n)\ell({n}^{1/2}).
\end{align*}
Finally, integration by parts yields the final upper bound
\begin{align*}
\lefteqn{ \frac{1}{2}
    \int_{n^{-1}}^{n^{-1/2}} \!\!\!\int
    (1-\exp\left(-nwv\right))
    \,\nu(\mathrm{d}w) \,\nu(\mathrm{d}v) } \\
&\leq \frac{1}{2}\int_{n^{-1}}^{n^{-1/2}}
    \!\!\!\int (1-\exp(-{n}^{1/2} w))
    \,\nu(\mathrm{d}w)\,\nu(\mathrm{d}v) \\
&= \frac{1}{2}\left(\int_{n^{-1}}^{n^{-1/2}}
    \,\nu(\mathrm{d}v)\right)
    \int n^{1/2} \exp\left(-{n}^{1/2} w\right)
    \,\bar\nu(w) \,\mathrm{d}w\\
&\sim \left(c_1 n^\alpha\ell(n) -
    c_2n^{\frac{\alpha}{2}}\ell({n}^{1/2})\right)
    \left(c_3{n}^{\alpha/2} \ell({n}^{1/2})\right) \\
    &\sim cn^{\frac{3\alpha}{2}}\ell(n)\ell({n}^{1/2}).
\end{align*}
Therefore $\mbe\left(|\bar E'_n|\right) =
O(\ell(n)\ell({n}^{1/2}) \,n^{\frac{3\alpha}{2}})$.
\end{proof}

Finally, we show that $|E_n|$, $|\bar E_n|$, and $|V_n|$ are asymptotically equivalent to their expectations almost surely;
thus, the asymptotic results for the expectation sequences applies to the random sequences.
\begin{lemma*}[\ref{lem:xvsex}, main text]
The number of edges and vertices for both the multi- and binary graphs satisfy
\begin{align*}
 |E_n| \overset{\text{a.s.}}{\sim} \mbe\left(|E_n|\right), \qquad |\bar E_n| \overset{\text{a.s.}}{\sim} \mbe\left(|\bar E_n|\right) \qquad |\bar V_n| = |V_n| \overset{\text{a.s.}}{\sim} \mbe\left(|V_n|\right), \qquad n\to \infty.
\end{align*}
\end{lemma*}

\begin{proof}
We use $X_n$ to refer to $|E_n|$, $|\bar E_n|$, or $|V_n|$, since the proof technique is the same for all.
Since we need to show $X_n/\mbe\left(X_n\right) \overset{\text{a.s.}}{\to} 1$,
by the Borel-Cantelli lemma it is sufficient to show that for any $\epsilon > 0$,
\begin{align*}
\sum_{n} P(\left|X_n-\mbe\left(X_n\right)\right| > \epsilon \mbe\left(X_n\right)) < \infty.
\end{align*}
By the union bound, and the fact that $X_n$ can be expressed as a countable sum of
indicators combined with the note after Theorem 4 in \citet{freedman1973},
\begin{align*}
\lefteqn{ P(\left|X_n-\mbe\left(X_n\right)\right| > \epsilon \mbe\left(X_n\right)) } \\
& \leq P(X_n > (1+\epsilon) \mbe\left(X_n\right)) + P(X_n < (1-\epsilon)\mbe\left(X_n\right))\\
&\leq  2 \exp\left(-\frac{\epsilon^2 \mbe\left(X_n\right)}{2} \right).
\end{align*}
Since $\mbe(X_n) \geq n^\beta$ for some $\beta > 0$, the expression is summable and the result holds.
\end{proof}

Combining the results of Lemmas \ref{lem:xvsex}, \ref{lem:poisseqreg}, and \ref{lem:mainlem}
gives us the main theorem, which we state here for completeness.

\begin{theorem*}[\ref{thm:asymp}, main text]
    If the tail of $\nu$ is regularly varying as per \Cref{eq:regvar}, then as $n\to\infty$,
\begin{align*}
\hspace{-.2cm}|V_n| \eqas \Theta(n^\alpha\ell(n)),
    && |E_n| \eqas \Theta(n),
    && |\bar E_n|
    \eqas O\left(\ell({n}^{1/2})\min\left(n^{\frac{1+\alpha}{2}},\ell(n)n^{\frac{3\alpha}{2}}\right)\right).
\end{align*}
\end{theorem*}

\begin{remark}
Finally, to conclude that there exists a class of sparse, edge-exchangeable graphs, we
examine the asymptotics from this result in more detail.
    In the multigraph case, we see that the number of vertices increases at the same rate as
    $n^{\alpha}\ell(n)$, and the number of edges increases linearly in $n$.
    So $|E_n|$ grows at the same rate as $|V_n|^{1/\alpha}\ell(n)^{-1/\alpha}$.
    When $\alpha \in (\nicefrac{1}{2}, 1)$, the exponent $1/\alpha$ lies
    in the range $(1,2)$,
    and thus this parameter range for $\alpha$ results in sparse
    graph sequences.
    For binary graphs, the number of edges
    $|\bar E_n|$ grows at a rate that is bounded by
    $\ell(\sqrt{n})\min\left\{ |V_n|^{\frac{1+\alpha}{2\alpha}} \ell(n)^{-\frac{1+\alpha}{2\alpha}}, |V_n|^{\frac{3}{2}} \ell(n)^{-\frac{1}{2}}\right\}$.
    Since $\min\left\{\frac{1+\alpha}{2\alpha}, \frac{3}{2}\right\} \leq \nicefrac{3}{2} < 2$, binary graphs are sparse for any $\alpha\in(0, 1)$.
    Note that $\ell(n)$ does not affect the growth rate
    throughout since it is a slowly-varying function;
    i.e., for all $c>0$, $\ell(cn) \sim \ell(n)$.
    For the three-parameter beta process, which we examined
    in our simulations,
    the function $\ell$ is a constant function, as in
    \Cref{eq-3bp-const}.
    \label{proof-remark}
\end{remark}

    We have shown that edge exchangeability admits sparse graphs by
    proving the existence of sparse graph sequences in a wide subclass of graph frequency models:
    those frequency models with
    weights generated from Poisson point processes whose rate measures have power law tails.
    Notably, we have shown the existence of a range of sparse and dense
    behavior in this wide class of graph frequency models, as desired.

\newpage
\small
\bibliographystyle{plainnat-mod}
\bibliography{sources}

\end{document}